\documentclass[11pt]{article} 
\usepackage{times}

\usepackage{microtype}
\usepackage{graphicx}
\usepackage{subfigure}
\usepackage{booktabs} 

\RequirePackage[authoryear,round]{natbib}
\usepackage{fullpage}
\usepackage[utf8]{inputenc}
\usepackage[T1]{fontenc}
\usepackage{microtype}
\usepackage{csquotes}
\usepackage[osf,sc]{mathpazo}
\RequirePackage[scaled=0.90]{helvet}
\RequirePackage[scaled=0.85]{beramono}
\RequirePackage{textcomp}
\usepackage{xcolor}
\definecolor{DarkGreen}{rgb}{0.075,0.375,0.075}
\definecolor{DarkRed}{rgb}{0.5,0.1,0.1}
\definecolor{DarkBlue}{rgb}{0.1,0.1,0.5}
\definecolor{Gray}{rgb}{0.2,0.2,0.2}
\usepackage[pdftex]{hyperref}
\hypersetup{
    unicode=false,          
    pdftoolbar=true,        
    pdfmenubar=true,        
    pdffitwindow=false,      
    pdfnewwindow=true,      
    colorlinks=true,       
    linkcolor=DarkBlue,          
    citecolor=DarkGreen,        
    filecolor=DarkRed,      
    urlcolor=DarkBlue,          
    pdftitle={},
    pdfauthor={},
}
\usepackage[skip=2mm,indent=5mm]{parskip}
\usepackage[small]{caption}
\RequirePackage{amsthm,amsmath,amsfonts,amssymb}

\usepackage[capitalize,noabbrev]{cleveref}

\usepackage{amsmath,amsfonts,bm}

\def\eqref#1{equation~\ref{#1}}

\def\1{\bm{1}}

\DeclareMathAlphabet{\mathsfit}{\encodingdefault}{\sfdefault}{m}{sl}
\SetMathAlphabet{\mathsfit}{bold}{\encodingdefault}{\sfdefault}{bx}{n}

\def\gF{{\mathcal{F}}}
\def\gG{{\mathcal{G}}}

\def\gX{{\mathcal{X}}}
\def\gY{{\mathcal{Y}}}

\newcommand{\E}{\mathbb{E}}

\DeclareMathOperator*{\argmax}{arg\,max}

\usepackage{hyperref}
\usepackage{url}
\usepackage{wrapfig}

\title{Predictable performance is suboptimal\\
and what it means for model routing}

\author{%
  Guanhua Zhang\footnote{Corresponding author: \href{mailto:guanhua.zhang@tuebingen.mpg.de}{guanhua.zhang@tuebingen.mpg.de}}, Moritz Hardt
}
\date{
\textit{Max Planck Institute for Intelligent Systems, Tübingen} \\
\textit{Tübingen AI Center}
}

\usepackage{mathtools}
\usepackage{amsthm}
\usepackage{dsfont}
\usepackage{pifont}

\newcommand{\given}{\,\vert\,}
\newcommand{\falt}{f_{\mathrm{Alt}}}
\newcommand{\Lalt}{L_{Y \mid Z=0}}
\newcommand{\Ly}{L_{Y}}
\newcommand{\Lz}{L_{Z}}
\newcommand{\pred}{\hat{f}}
\newcommand{\recal}[1]{\mathcal{C}_{g}(#1)}
\renewcommand{\1}{\mathds{1}}

\newcommand{\model}[1]{\textsc{#1}}
\newcommand{\dataset}[1]{\texttt{#1}}

\newtheorem{theorem}{Theorem}
\newtheorem{proposition}{Proposition}
\newtheorem{lemma}{Lemma}
\newtheorem{assumption}{Assumption}
\newtheorem{corollary}{Corollary}

\begin{document}

\maketitle

\begin{abstract}
Large language models (LLMs) differ in capabilities and costs, motivating routing methods that adaptively choose a model for each query.
These methods rely on \emph{performance prediction}: given a query, estimate whether a model will answer it correctly.
Most routing methods train separate predictors of candidate models' performance to guide model selection.
Our work shows that performance prediction only beats a simple baseline when the candidate model is suboptimal on the task or the performance predictor adds significant capacity relative to the candidate model.
Specifically, we take a model's self-confidence, the probability it assigns to its chosen answer, as a baseline and study how much a performance predictor can improve on it.
First, we show that any performance predictor that outperforms a model's self-confidence can be used to improve the model's loss: the predictor's advantage transfers exactly in log loss through recalibration.
Second, accounting for the predictor's additional parameters, we bound its advantage by the model's suboptimality plus the gain from those parameters.
Third, we extend our theoretical results to two-model cascades, which route queries to a more capable model when the first model is predicted to fail.
Experiments with 16 language models on four benchmarks show that predictors gain a larger advantage over self-confidence on more suboptimal models.
This advantage shrinks as models improve through finetuning.
Cascading exhibits the same two patterns: predictor-based routing offers larger gains over confidence-based routing for more suboptimal models, and these gains diminish after finetuning.
Code is available at \url{https://github.com/socialfoundations/performance-prediction}.
\end{abstract}

\section{Introduction}
\label{sec:intro}

\begin{figure}[t]
  \centering
  \includegraphics[width=\linewidth]{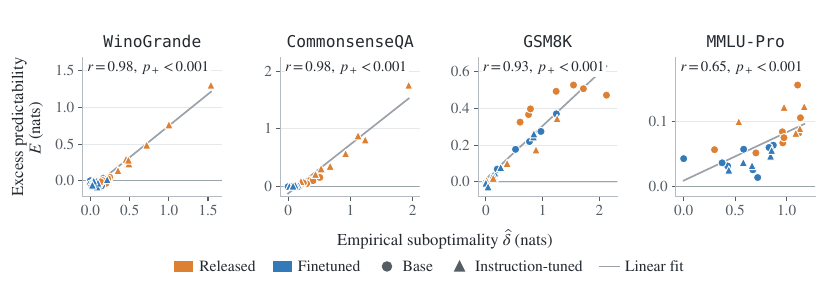}
  \caption{\textbf{Predictors have a larger advantage over self-confidence on more suboptimal models.}
  Each point is a released (orange) or finetuned (blue) model.
  Empirical suboptimality is the gap to the best observed answer log loss on each task.
  Excess predictability $E$ is the reduction in correctness log loss achieved by a separately trained 4B predictor over self-confidence.
  Colors identify training stage; shapes identify the release type.
  Lines are least-squares fits; $r$ is Pearson correlation and $p_+$ its one-sided $p$-value for positive correlation.
  See Section~\ref{sec:exp-setup} for experimental details.
  }
  \label{fig:exp-predictability}
\end{figure}

Performance prediction estimates whether a model answers a query correctly~\citep{kadavath2022language,shnitzer2023routing}.
These predictions can help routing systems decide when to trust an answer and when to call a stronger model~\citep{geifman2017selective,chen2023frugalgpt}.
By catching a weak model's errors accurately, routing can achieve better performance at the same cost.
But what makes a model's errors predictable, and what does that reveal about the model?

In this paper, we study the predictability of a model's performance relative to a simple baseline: the model's \emph{self-confidence}, the probability it assigns to its chosen answer.
Our key observation is that \emph{any predictor that beats self-confidence can improve the model by the same amount}.
This means that a predictor's advantage over self-confidence is limited by the model's room for improvement.
We formalize and test this connection, and extend the analysis to routing.

\subsection{Our Contributions}

\textbf{Suboptimality bounds excess predictability.}
We connect performance prediction and model improvement through recalibration: replacing the model's confidence in its chosen answer with the predictor's estimate, and normalising the rest.
We call the predictor's reduction in correctness log loss relative to self-confidence \emph{excess predictability}.
Recalibration reduces the model's answer log loss by exactly the same amount (Proposition~\ref{prop:transfer}).
We assume a \emph{scaling floor}: the lowest answer loss achievable on a fixed task at each parameter budget (Assumption~\ref{a:floor}).
The recalibrated model cannot beat the floor at the combined parameter budget of the model and predictor.
This gives the bound (Theorem~\ref{thm:predictor-floor})
\[
\boxed{\;
  \text{\itshape excess predictability}
  \;\leq\;
  \text{\itshape suboptimality}
  \; + \;
  \text{\itshape capacity gain}
\;}.
\]
Here, \emph{suboptimality} is the model's gap to the floor at its own budget.
The \emph{capacity gain} is the decrease in the floor from adding the predictor's parameters.

\textbf{Greater excess predictability comes with greater suboptimality.}
Our upper bound does not imply that more suboptimal models are necessarily easier to predict beyond self-confidence.
Empirically, however, we observe a positive association between the two.
We conduct experiments on \dataset{WinoGrande}, \dataset{CommonsenseQA}, \dataset{GSM8K}, and \dataset{MMLU-Pro}, using 16 language models from eight families, ranging from 7 to 12B parameters.
Using the gap to the best observed answer loss on each task as a proxy for suboptimality, we find a positive association with excess predictability on all four tasks, with Pearson correlations of 0.98, 0.98, 0.93, and 0.65, respectively (Figure~\ref{fig:exp-predictability}).
We also finetune the models, retaining the 59 model-task pairs whose development accuracy improves.
Answer loss falls in all 59 retained pairs and excess predictability falls in 55 (Figure~\ref{fig:exp-finetuning-changes}), despite training separate 4B predictors for the released and finetuned models.

\textbf{Suboptimality bounds error capture beyond confidence routing.}
We extend our analysis to two-model cascades, where a router decides when to escalate a query to a stronger model.
We compare a learned router with confidence routing, which escalates the weak model's lowest-confidence queries, at the same escalation rate.
Any router that captures more weak-model errors can be converted into a predictor with positive excess predictability, even if the router does not explicitly estimate correctness (Lemma~\ref{lem:router-to-predictor}).
This bounds the additionally captured weak-model errors by weak-model suboptimality and the capacity gain from the router's added parameters (Theorem~\ref{thm:weak-error-capture}).
The cascade accuracy gain equals the additional weak-model errors captured minus the additional strong-model errors encountered.
The same bound applies to the cascade accuracy gain if the learned router encounters at least as many strong-model errors as confidence routing (Corollary~\ref{cor:cascade-improvement}).

\textbf{More suboptimal models have larger routing advantages.}
Using the same models and predictors, we compare predictor-based routing with confidence routing at equal escalation rates.
We measure the \emph{routing advantage} as the gain in cascade accuracy, averaged over escalation rates.
Models with greater empirical suboptimality tend to have larger routing advantages on all four tasks, with Pearson correlations between 0.66 and 0.91 (Figure~\ref{fig:exp-cascade-advantage}).
This advantage shrinks after finetuning in 53 of the 59 retained pairs (Section~\ref{sec:exp-cascading}).

\textbf{Main takeaway.} Together, these results suggest three sources of predictability: the model's self-confidence, its suboptimality, and the predictor's added capacity. 
Improving the underlying model generally reduces the additional value of learned prediction and routing beyond self-confidence.

\section{Related Work}
\label{sec:related}

\textbf{Performance prediction.}
Performance predictors, also called assessors, estimate whether a system will succeed on a given input \citep{hernandezorallo2022assessors,zhou2022reject}.
They use model outputs \citep{yao2024correctness}, reference evaluations and question features \citep{pacchiardi2024instances}, follow-up queries \citep{sam2025followup}, or internal activations before the model answers \citep{cencerrado2026noanswer}.
\citet{kapoor2024taught} finetune language models on graded answers to predict their own correctness and that of other models.
\citet{xiao2026correctness} find that models predict their own correctness no better than unrelated models do, and train predictors that work across models.
Beyond binary correctness, \citet{hsu2026pie} estimate continuous quality scores and uncertainty intervals for free-form generation.
At the benchmark level, studies forecast performance from training compute \citep{owen2024predictable} or scores on other benchmarks \citep{ruan2024observational}.
Others estimate benchmark scores from a small set of evaluated examples \citep{maiapolo2024tinybenchmarks,zhang2025benchmark}.
We study where predictability on individual inputs comes from, using self-confidence as the baseline and correctness log loss to measure a predictor's advantage.

\textbf{Model routing and cascading.}
Performance predictions help routing systems balance cost and performance by choosing which models to use for each query \citep{ong2024routellm}.
We use routing broadly to include cascading, where a cheaper model answers first and a router uses its output to decide whether to call a stronger model \citep{chen2023frugalgpt,dekoninck2025unified}.
Routers learn which model to use from preference data, as in RouteLLM \citep{ong2024routellm}, benchmark results \citep{shnitzer2023routing,mohammadshahi2024routoo}, or reward-model scores \citep{lu2024routing}.
Some predict query difficulty to guide model selection \citep{ding2024hybrid}.
RouterBench provides a common framework for evaluating routing methods \citep{hu2024routerbench}.
In cascades, routing rules often escalate to a larger model when the first model has low confidence or gives inconsistent answers \citep{varshney2022cascading,yue2024cascades,chuang2025confident}.
Learned routing rules use token-level uncertainty and model embeddings \citep{gupta2024cascades} or verify the first model's answer before deciding whether to escalate \citep{aggarwal2024automix}.
\citet{wang2024cascadeaware} also improve cascades by training the first model to account for deferral.
\citet{dekoninck2025unified} derive an optimal cascading policy and prove that an existing routing policy is optimal; both depend on good estimates of answer quality.
\citet{jitkrittum2023confidence} show when confidence alone gives an optimal deferral rule and when a learned rule \citep{narasimhan2022posthoc} is needed.
We bound how many more of the first model's errors a learned router can catch than confidence routing under the same escalation rate.

\textbf{Confidence and selective prediction.}
Language models can estimate whether their answers are correct \citep{kadavath2022language} through stated confidence \citep{tian2023just,xiong2024can} or agreement among sampled answers \citep{wang2023selfconsistency,kuhn2023semantic}.
Learned verifiers score candidate solutions to select better answers \citep{cobbe2021verifiers,lightman2023verify}, and the gap between generation and verification abilities is linked to self-improvement \citep{song2025mind}.
Confidence also guides decisions about when to answer.
Selective prediction uses it to decide when to abstain \citep{chow1970optimum,geifman2017selective}, while learning to defer also considers whether another expert will be correct \citep{mozannar2020defer}.
For selective prediction, both calibration and how confidence ranks inputs matter \citep{rabanser2025selective}.
Inputs with the same confidence can still differ in their probability of correctness; grouping loss measures this variation \citep{kull2015novel,perezlebel2023beyond}.
\citet{melo2026epistemic} estimate grouping loss and excess decision risk to guide LLM cascades, and find that grouping loss decreases with model scale and instruction tuning.
Our experiments examine how gains over confidence vary with answer-loss suboptimality across models of similar size and before and after task-specific finetuning.

\textbf{Calibration and loss minimization.}
Language models can produce informative but poorly calibrated risk scores \citep{cruz2024evaluating}.
Post-hoc calibration adjusts confidence to match observed correctness \citep{platt2000probabilistic,guo2017calibration,kumar2019verified}.
Multicalibration extends this requirement to identifiable groups \citep{hebertjohnson2018multicalibration}, including those defined by LLM embeddings and self-annotations \citep{detommaso2024multicalibration}.
Loss minimization and multicalibration theory show how correcting predictable errors in probability estimates can reduce loss, including squared loss and cross-entropy \citep{gopalan2022omnipredictors,gopalan2023loss,blasiok2024loss}.
We show that any reduction in correctness log loss over self-confidence yields an equal reduction in answer log loss.
Under our assumptions for combining models and counting parameters, this gain is bounded by the original model's suboptimality plus the improvement possible with the added parameters.

\section{Predictable Performance is Suboptimal}
\label{sec:main}
We show that gains over self-confidence transfer exactly to lower answer loss and are bounded by model suboptimality and added predictor capacity.
Proofs are deferred to Appendix~\ref{app:proofs-main}.

\subsection{Problem Definition}
Let $(X,Y)$ be drawn from a fixed population, where $X \in \gX$ is the query and $Y \in \gY = \{0,\dots,K-1\}$ is the answer.
A \emph{model} $f \in \gF$ is defined as a conditional probability mass function $f(y|x)$.
Its point prediction is $\pred(x)=\argmax_y f(y\given x)$, with ties broken by a fixed rule.
We score any model $f$ by its \emph{answer loss} $\Ly(f) = \E[-\log f(Y \given X)]$.

We define the correctness indicator $Z=\1\{Y = \pred(X)\} \in \{0,1\}$.
A \emph{performance predictor} $g \in \gG$ is defined as a conditional probability mass function $g(z\given x)$ that may read $f$'s output on $x$.
We require full support, $g(z\given x)>0$, and score $g$ by its \emph{correctness loss} $\Lz(g,f) = \E[-\log g(Z \given X) ]$.

The model's self-confidence provides a natural baseline: any model $f$ induces a performance predictor $f_Z$ with $f_Z(1 \given x) = f(\pred(x) \given x)$.
We abbreviate $\Lz(f) \coloneqq \Lz(f_Z,f)$.

The goal of performance prediction is to minimize $\Lz(g,f)$, where $\Lz(f)$ serves as a baseline.
We call $\Lz(f)-\Lz(g,f)$ the \emph{excess predictability} achieved by $g$ relative to $f$'s self-confidence; a performance predictor is \emph{effective} when its excess predictability is positive.

\subsection{Any Effective Performance Predictor Improves the Model}
We use $g$ to change how much probability $f$ assigns to its chosen answer, while preserving the relative probabilities of the other answers.

Assuming $f_Z(0\given x)>0$, define the alternative distribution and the \emph{recalibrated model} by
\begin{align}
    \falt(y\given x)&=f(y\given x)/f_Z(0\given x),
    \qquad y\neq\pred(x), \label{eq:def_falt}\\
  \recal{f}(y\given x)&=\begin{cases}
      g(1\given x) & y=\pred(x),\\
      g(0\given x)\falt(y\given x) & y\neq\pred(x)
  \end{cases}.\label{eq:def_recal}
\end{align}
Thus, $\recal{f}$ assigns mass $g(1\given x)$ to $\pred(x)$ and distributes the remaining mass among the other answers in their original proportions.
Since only the correctness probabilities change, the following proposition transfers the predictor's improvement exactly to the answer loss.

\begin{proposition}[Exact transfer]\label{prop:transfer}
For any $f$ with $\Ly(f) < \infty$ and any performance predictor $g$,
\begin{equation}\label{eq:transfer}
  \boxed{\Ly \big(\recal{f}\big) = \Ly(f) - \big(\Lz(f) - \Lz(g,f)\big)}.
\end{equation}
\end{proposition}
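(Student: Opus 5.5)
Our approach is a pointwise decomposition of the log loss into a "correctness" part and an "alternative answer" part, one for each model, followed by subtracting the two identities. The key observation is that $f$ and $\recal{f}$ share the same point prediction on each $x$ only through the indicator $Z$, which is defined using $\pred$ of the original $f$. The identity holds regardless of whether $\recal{f}$ changes the argmax: we never need the recalibrated model's own point prediction.

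\textbf{Step 1: pointwise factorization of $f$.} On the event $Z=1$, i.e.\ $Y=\pred(X)$, we have $-\log f(Y\given X) = -\log f_Z(1\given X)$. On the event $Z=0$, where $Y\neq\pred(X)$, definition \eqref{eq:def_falt} gives $f(Y\given X)=f_Z(0\given X)\,\falt(Y\given X)$. Hence
\[
-\log f(Y\given X) = -\log f_Z(Z\given X) - \1\{Z=0\}\log \falt(Y\given X).
\]

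\textbf{Step 2: the same factorization for $\recal{f}$.} Using \eqref{eq:def_recal}, we get $-\log \recal{f}(Y\given X) = -\log g(Z\given X) - \1\{Z=0\}\log\falt(Y\given X)$ pointwise, with the identical second term.

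\textbf{Step 3: take expectations and subtract.} The common term $\E[-\1\{Z=0\}\log\falt(Y\given X)]$ cancels, leaving
\[
\Ly(\recal{f}) - \Lz(g,f) = \Ly(f) - \Lz(f),
\]
which rearranges to \eqref{eq:transfer}.

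\textbf{Main obstacle: integrability and degenerate cases.} The main obstacle is measure-theoretic bookkeeping rather than algebra. Subtracting expectations requires all terms to be finite, or at least that we never form $\infty-\infty$. Since $\Ly(f)<\infty$, we have $f(Y\given X)>0$ almost surely. Moreover $-\log f_Z(Z\given X)\ge 0$ and the $\falt$ term is also $\ge 0$, because $\falt\le 1$. Both nonnegative pieces sum to an integrable quantity, so each is finite, and in particular $\Lz(f)<\infty$ and the common term is finite.

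The term $\Lz(g,f)$ may be infinite, but then both sides of \eqref{eq:transfer} are $+\infty$, and the identity holds in the extended sense. The argument goes through cleanly if one adds the nonnegative finite common term to the $g$ part.

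We must also handle points $x$ where $f_Z(0\given x)=0$, where $\falt$ is undefined. There $f$ puts all mass on $\pred(x)$, so almost surely $Z=1$ on such $x$, and the $\1\{Z=0\}$ term vanishes. We would adopt the convention that $\recal{f}$ assigns $g(0\given x)$ arbitrarily among the other answers; this does not affect the loss almost surely.
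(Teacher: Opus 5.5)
Your proposal is correct and matches the paper's proof: you use the same pointwise factorization into a correctness term plus the shared alternative term $\1\{Z=0\}\bigl(-\log\falt(Y\given X)\bigr)$, then take expectations and cancel. Your integrability bookkeeping is a careful addition that the paper leaves implicit. This covers nonnegativity of both pieces, finiteness of the common term from $\Ly(f)<\infty$, the extended-real case $\Lz(g,f)=\infty$, and the $f_Z(0\given x)=0$ convention.
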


Recalibrating with an effective $g$ strictly lowers the answer loss by exactly its excess predictability over $f$'s self-confidence, $\Lz(f)-\Lz(g,f)>0$.
Recalibration is also hard to beat: any model with lower answer loss than $\recal{f}$ must predict $f$'s correctness better than $g$, or predict better among the alternative answers when $f$ is wrong (Proposition~\ref{prop:better-component} in Appendix~\ref{app:better-component}).

\subsection{Predictable Performance is Suboptimal}
\label{sec:parameter-budgets}

For parameter budgets $P,Q\ge 0$, let $\gF_P\subseteq\gF$ denote the models with at most $P$ parameters, and let $\gG_Q\subseteq\gG$ denote the performance predictors with at most $Q$ parameters.
Both sets are therefore nested in their respective budgets.

We allow models in $\gF$ to use performance predictors and routers as components, and apply the same convention to predictors in $\gG$.
We count all parameters, including frozen weights, but count shared parameters only once.
The fixed arithmetic and decision operations used below add no parameters.
Thus, recalibrating $f\in\gF_P$ with $g\in\gG_Q$ gives a model $\recal{f}\in\gF_{P+Q}$.
The loss floor and suboptimality statements below apply to the full set $\gF$ on the fixed task distribution of $(X,Y)$.

Empirical neural scaling laws suggest that the best loss achievable follows a predictable frontier as model size increases~\citep{kaplan2020scaling,hoffmann2022compute}.
We idealize this as follows.

\begin{assumption}[Scaling floor]\label{a:floor}
For each budget $P$, some model $f_P^{*}\in\gF_P$ achieves the lowest loss $\Ly^{*}(P)$ in $\gF_P$:
\begin{equation}\label{eq:scaling-floor}
  \Ly(f)\ge\Ly^{*}(P)
  =\Ly(f_P^{*})
  \qquad\text{for every }f\in\gF_P.
\end{equation}
\end{assumption}

The floor $\Ly^{*}(P)$ is non-increasing in $P$, since a larger budget includes all models available at a smaller budget.
Relative to this floor, define the \emph{suboptimality at budget $P$} of $f\in\gF_P$ as
\begin{equation}\label{eq:excess-loss}
  \delta_P(f)\coloneqq\Ly(f)-\Ly^{*}(P)\ge 0.
\end{equation}
The experiments use the gap to the best observed loss in a cohort of similar-size models as a proxy for this suboptimality (Section~\ref{sec:exp-suboptimality}).

The recalibrated model uses at most $P+Q$ parameters, so its loss cannot fall below $\Ly^{*}(P+Q)$.
Together with exact transfer, this limits how much $g$ can improve on $f$'s self-confidence.

\begin{theorem}[Excess predictability bounded by suboptimality and added capacity]
\label{thm:predictor-floor}
Under Assumption~\ref{a:floor}, every $f\in\gF_P$ with
$\Ly(f)<\infty$ and every $g\in\gG_Q$ satisfy
\begin{equation}\label{eq:edge-cap}
  \boxed{
  \Lz(f)-\Lz(g,f)
  \le
  \big[\Ly^{*}(P)-\Ly^{*}(P+Q)\big]
  +\delta_P(f)
  }.
\end{equation}
\end{theorem}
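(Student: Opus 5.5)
The plan is to combine Proposition~\ref{prop:transfer} with the parameter-counting convention and Assumption~\ref{a:floor}. Fix $f\in\gF_P$ with $\Ly(f)<\infty$ and $g\in\gG_Q$. First, form the recalibrated model $\recal{f}$ from \eqref{eq:def_recal}. By the counting convention stated above, $\recal{f}\in\gF_{P+Q}$: it uses only $f$'s parameters, $g$'s parameters (shared ones counted once), and fixed arithmetic.

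Applying Assumption~\ref{a:floor} at budget $P+Q$ gives $\Ly(\recal{f})\ge \Ly^{*}(P+Q)$. Proposition~\ref{prop:transfer} rewrites the left side as $\Ly(f)-(\Lz(f)-\Lz(g,f))$. Rearranging yields
\begin{equation*}
\Lz(f)-\Lz(g,f)\le \Ly(f)-\Ly^{*}(P+Q).
\end{equation*}
Finally, add and subtract $\Ly^{*}(P)$ and use the definition \eqref{eq:excess-loss} of $\delta_P(f)$. This splits the right side into $[\Ly^{*}(P)-\Ly^{*}(P+Q)]+\delta_P(f)$, which is \eqref{eq:edge-cap}.

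The one delicate point is the well-definedness of $\recal{f}$, which requires $f_Z(0\given x)>0$. Suppose instead that $f_Z(1\given x)=1$ on a set of positive probability. Then $\Ly(f)<\infty$ forces $Z=1$ almost surely on that set, so the baseline contributes zero loss there and $g$ cannot beat it.

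To handle this case, I would first try defining $\recal{f}(\pred(x)\given x)=1$ on that set. Equivalently, I would apply Proposition~\ref{prop:transfer} as stated and rely on its hypotheses covering this case. Either way, the transfer identity and the conclusion are unaffected.

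I also have to make sure the arithmetic with $\Ly^{*}$ is finite. This holds because $\Ly^{*}(P+Q)\le\Ly^{*}(P)\le\Ly(f)<\infty$, and all losses are nonnegative.
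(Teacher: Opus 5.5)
Your proof is correct and matches the paper's: form $\recal{f}\in\gF_{P+Q}$, apply the scaling floor at budget $P+Q$ together with exact transfer, then rearrange using the definition of $\delta_P(f)$. The paper sidesteps the degenerate case $f_Z(0\given x)=0$ by assuming $f_Z(0\given x)>0$, and your extra treatment of it is sound, with one caveat: setting $\recal{f}(\pred(x)\given x)=1$ there does not leave the transfer identity intact. It turns the identity into the inequality $\Ly(\recal{f})\le\Ly(f)-(\Lz(f)-\Lz(g,f))$, which runs in the direction the bound needs, so your conclusion still holds.
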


Equation~\ref{eq:edge-cap} separates two sources of excess predictability.
For the loss-optimal model $f_P^{*}$, a predictor can recover at most the \emph{scaling gain} $\Ly^{*}(P)-\Ly^{*}(P+Q)$, the decrease in the loss floor from adding $Q$ parameters.
A model farther from the floor increases this recoverable excess predictability by at most its suboptimality $\delta_P(f)$.
Any improvement over $f$'s self-confidence that exceeds the scaling gain therefore certifies $\delta_P(f)>0$.
In particular, if the performance predictor adds no parameters, then strictly positive excess predictability implies $\delta_P(f)>0$.

\paragraph{Additional results.}
First, under diminishing returns, a predictor much smaller than the model has excess predictability bounded by the model's suboptimality plus a small capacity correction (Corollary~\ref{cor:diminishing-returns}, Appendix~\ref{app:diminishing-returns}).
Second, suboptimality does not determine excess predictability: lowering answer loss does not necessarily lower correctness loss.
Models can have the same maximum excess predictability yet differ in suboptimality because they assign different probabilities to the answers they do not choose (Proposition~\ref{prop:alternative-invariance}, Appendix~\ref{sec:tightness}).

\subsection{Experiments}
\label{sec:experiments}

Motivated by the preceding bound on excess predictability, we study two empirical questions on 16 language models across four benchmarks.
Do more suboptimal models exhibit greater excess predictability over self-confidence?
Does task-specific finetuning reduce this advantage?

\subsubsection{Setup}
\label{sec:exp-setup}

\textbf{Models and tasks.}
We evaluate 16 base and instruction-tuned models from eight families, with 7 to 12B parameters.
We select four tasks, 
\dataset{WinoGrande} \citep{sakaguchi2019winogrande}, \dataset{CommonsenseQA} \citep{talmor2019commonsenseqa}, \dataset{GSM8K} \citep{cobbe2021verifiers} and \dataset{MMLU-Pro} \citep{wang2024mmlupro}.
\dataset{WinoGrande} and \dataset{CommonsenseQA} are evaluated by scoring answer options, and \dataset{GSM8K} and \dataset{MMLU-Pro} are evaluated by generating answers.
The binary task \dataset{WinoGrande} is a sanity check: predicting a fixed model's correctness is equivalent to predicting the answer.
Appendix~\ref{app:exp-models} lists model releases and data splits, shared across models.

\textbf{Evaluation.}
We use five-shot evaluation with \texttt{lm-eval-harness} \citep{gao2024harness}.
The answer distribution $f(\cdot\given x)$ comes from normalized option likelihoods on the first two tasks and answer frequencies from 128 sampled solutions per question on the generation tasks.
We select the highest-probability answer $\pred$ and use its probability as self-confidence.
On test questions, we measure answer log loss $L_Y(f)$ and correctness log loss $L_Z(f)$; cascades are evaluated by accuracy.
Appendix~\ref{app:exp-losses} details scoring, sampling, answer extraction, and probability estimation.

\textbf{Finetuning.}
We use LoRA \citep{hu2021lora} to finetune on gold answers for the option-scoring tasks and on reasoning traces by \model{Qwen3.5-122B-A10B} for the generation tasks, selecting checkpoints on development performance.
Our experiment design is built upon the idea that task-specific finetuning reduces suboptimality.
Therefore, for each task, we consider all models whose development accuracy does not improve after finetuning to be anomalous and remove the corresponding model pairs from the analysis.
This leaves 16, 16, 14, and 13 model pairs for the respective tasks.
Appendix~\ref{app:exp-training} gives training and retention details.

\textbf{Suboptimality.}
For each task, let $\mathcal C$ contain the released and finetuned versions of the retained models.
We use their best observed answer loss as a proxy for the loss floor at similar model sizes:
\begin{equation}
  \widehat\delta(f)=L_Y(f)-\min_{h\in\mathcal C}L_Y(h).\label{eq:exp-endpoints}
\end{equation}
This reference is fixed for all models, so changes in empirical suboptimality equal changes in answer loss; changing the reference shifts all gaps equally and leaves correlations unchanged.

\textbf{Performance predictor.}
We train a separate \model{Qwen3-Embedding-4B} predictor \citep{zhang2025qwen3embedding} for each model, task, and finetuning stage.
Its input contains the question, the model's final answer, option probabilities or sampled-answer frequencies, and, for generation tasks, a sampled reasoning trace ending in the majority answer.
A linear head on the embedding corrects the self-confidence logit; we train the head and LoRA adapters from a Platt-calibrated initialization~\citep{platt2000probabilistic}.
We fit predictors on 85\% of the development set and select the checkpoint and seed with lowest correctness loss on the remaining 15\%.
On the test set, excess predictability is
\begin{equation}
  E(f)=L_Z(f)-L_Z(g,f).\label{eq:exp-edge}
\end{equation}
Appendix~\ref{app:exp-predictor} gives the architecture and training details.

\subsubsection{Results}
\label{sec:exp-suboptimality}

\begin{figure}[!t]
  \centering
  \includegraphics[width=\linewidth]{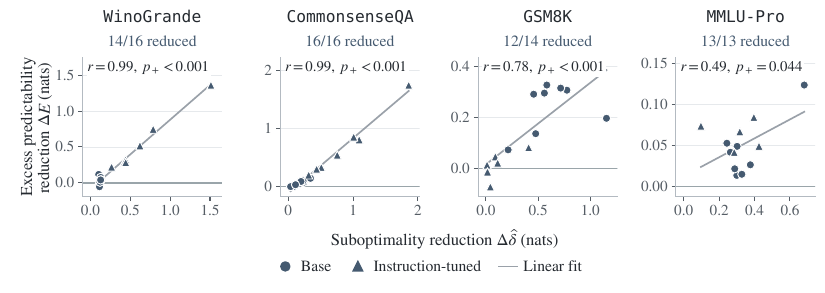}
  \caption{\textbf{Finetuning reduces excess predictability in 55 of 59 model-task pairs.}
  Each point compares one released model with its selected finetuned version; both axes show released minus finetuned values.
  $\Delta\widehat\delta$ is the reduction in answer log loss, and $\Delta E$ is the reduction in excess predictability.
  Points above zero have lower excess predictability after finetuning; counts report these pairs.
  Suboptimality decreases in all 59 pairs.
  Shapes identify the release type; lines are least-squares fits; $r$ is Pearson correlation and $p_+$ its one-sided $p$-value for positive correlation.}
  \label{fig:exp-finetuning-changes}
\end{figure}

\textbf{Greater excess predictability comes with greater suboptimality.}
Figure~\ref{fig:exp-predictability} shows a positive correlation between empirical suboptimality and excess predictability across all four tasks.
We plot released and finetuned models together because the relationship concerns a model's suboptimality, regardless of how it was trained.
Appendix~\ref{app:exp-groups} shows the two groups separately; the association also holds among released models alone.
The association is stronger on the simpler tasks, \dataset{WinoGrande} and \dataset{CommonsenseQA}, with Pearson correlations of 0.98 for both; on \dataset{WinoGrande} this is the expected sanity-check outcome (Section~\ref{sec:exp-setup}).
On the generation-based tasks, \dataset{GSM8K} and \dataset{MMLU-Pro}, the correlations are 0.93 and 0.65, respectively.
The weaker association on \dataset{MMLU-Pro} may partly reflect variation in alternative loss: models can differ in answer loss without differing in excess predictability (Appendix~\ref{sec:tightness}).

\begingroup
\setlength{\intextsep}{6pt}
\begin{wrapfigure}{R}{0.4\textwidth}
\vspace{-3mm}
  \centering
  \setlength{\abovecaptionskip}{4pt}
  \includegraphics[width=0.95\linewidth]{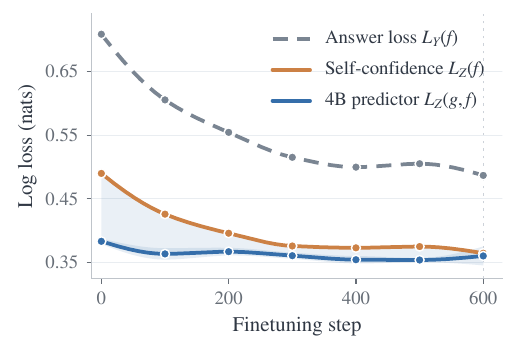}
  \caption{Answer and correctness losses during finetuning of \model{Llama-3.1-8B} on \dataset{CommonsenseQA}.
  Shading shows excess predictability $E(f)$; predictor loss averages three seeds.}
  \label{fig:exp-trajectory}
\vspace{-5mm}
\end{wrapfigure}

\textbf{Finetuning reduces excess predictability.}
Excess predictability falls in 55 of the 59 retained model-task pairs after finetuning (Figure~\ref{fig:exp-finetuning-changes}): 14 of 16 on \dataset{WinoGrande}, all 16 on \dataset{CommonsenseQA}, 12 of 14 on \dataset{GSM8K}, and all 13 on \dataset{MMLU-Pro}.
The four exceptions have relatively small reductions in suboptimality, indicating that finetuning improves their answer loss only modestly.
Note that we train a new predictor for each finetuned model, yet its gain over self-confidence still shrinks.
Larger reductions in suboptimality tend to accompany larger reductions in excess predictability, with Pearson correlations of 0.99, 0.99, 0.78, and 0.49 in task order.

During finetuning of \model{Llama-3.1-8B} on \dataset{CommonsenseQA}, self-confidence catches up with separately retrained predictors: excess predictability falls from 0.107 to 0.004 nats (Figure~\ref{fig:exp-trajectory}).
Appendix~\ref{app:exp-training} gives the trajectory setup.

\textbf{Predictor capacity.}
We compare how much excess predictability 0.6B, 4B, and 8B predictors capture on the same models (Appendix~\ref{app:experiment-capacity}).
Larger predictors generally capture more: the 0.6B predictor adds little beyond Platt-calibrated self-confidence, while the 4B and 8B predictors generally improve on it.
Our main finding holds at all three predictor sizes: finetuning reduces mean excess predictability on every task.
For the 4B predictor, mean gains over Platt-calibrated self-confidence also decrease after finetuning on every task (Appendix Figure~\ref{fig:exp-size-ablation}).
\par
\endgroup

\section{Suboptimality Bounds Cascading}
\label{sec:cascading}
In a cascade, a router decides which queries to send from a weak model to a stronger model.
We compare a learned router with the weak model's self-confidence at the same escalation rate.
Cascades are judged by accuracy at that rate, so this section bounds error rates rather than log loss.
The key connection is that a router that finds more weak-model errors also supplies a better performance predictor.
Proofs are deferred to Appendices~\ref{app:routing-to-prediction}--\ref{app:cascade-advantage}.

\subsection{Cascading Setup}

Let $f_1\in\gF_{P_1}$ and $f_2\in\gF_{P_2}$ be the weak and strong models, with $P_1<P_2$ and $\Ly(f_1)<\infty$.
For $i=1,2$, write $Z_i=\1\{Y=\pred_i(X)\}$ and $\epsilon_i(x)=\Pr(Z_i=0\given X=x)$ for correctness and the actual error probability.
The weak model estimates its error probability by $f_{1,Z}(0\given x)=1-f_1(\pred_1(x)\given x)$, assumed strictly between zero and one almost surely.

A budget-$Q$ router assigns a score $r:\gX\to\mathbb R$ using at most $Q$ additional parameters beyond $f_1$.
At escalation rate $\rho\in[0,1]$, we escalate the highest-scoring $\rho$ fraction of queries to $f_2$, breaking ties uniformly at random.
Its \emph{routing decision} $B_{r,\rho}(x)$ is one for escalation and zero for keeping $f_1$'s answer.
Self-confidence routing applies the same rule to $f_{1,Z}(0\given x)$, giving $B_{f,\rho}$.
At a fixed rate, abbreviate these decisions as $B_r,B_f$, with $\E[B_r]=\E[B_f]=\rho$, and set $u=B_r-B_f$.

Writing $\operatorname{Err}(f)$ for a model's error rate, the difference between the two cascades decomposes as
\begin{equation}\label{eq:relative-cascade-decomposition}
  \begin{aligned}
    \operatorname{Err}(f_{\mathrm{cas},B_r})-\operatorname{Err}(f_{\mathrm{cas},B_f})
    &=-\underbrace{\E[u\epsilon_1(X)]}_{\text{additional weak errors captured}}
    +\underbrace{\E[u\epsilon_2(X)]}_{\text{additional strong errors encountered}}.
  \end{aligned}
\end{equation}
A router can improve on confidence by capturing more weak-model errors, encountering fewer strong-model errors, or both.
We first bound the weak-error contribution, then state when this also bounds the cascade advantage.

\subsection{From Performance Prediction to Cascade Bounds}

At a fixed escalation rate, $B_f$ escalates the lowest-confidence $\rho$ fraction of queries and therefore maximizes the predicted weak errors captured, $\E[B_f f_{1,Z}(0\given X)]$. Thus $\E[u\,f_{1,Z}(0\given X)]\le0$.
A router that captures more actual errors must therefore identify where self-confidence misestimates their probability:
\begin{equation}\label{eq:routing-confidence-residual}
  \begin{aligned}
    \E[u\epsilon_1(X)]
    &=\E[u\,f_{1,Z}(0\given X)]
      +\E[u(\epsilon_1(X)-f_{1,Z}(0\given X))]\\
    &\le\E[u(\epsilon_1(X)-f_{1,Z}(0\given X))].
  \end{aligned}
\end{equation}

We can construct a performance predictor $g_\lambda$ from the routing score $r$ by shifting the weak model's confidence where its routing decisions differ from self-confidence routing:
\begin{equation}\label{eq:routing-confidence-correction}
  \operatorname{logit}g_\lambda(0\given x)
  =\operatorname{logit}f_{1,Z}(0\given x)+\lambda u(x),
\end{equation}
where $\operatorname{logit}(a)=\log(a/(1-a))$ and $g_\lambda(1\given x)=1-g_\lambda(0\given x)$.
Lemma~\ref{lem:router-to-predictor} in Appendix~\ref{app:routing-to-prediction} shows that, if $\E[u\epsilon_1(X)]>0$, then there exists $\lambda>0$ such that $\Lz(g_\lambda,f_1)<\Lz(f_1)$.
In other words, a router that captures additional weak-model errors can be used to construct an effective predictor.

Recalibration with $g_\lambda$ (see Equation~\ref{eq:def_recal}) reuses $f_1$ and adds the budget-$Q$ routing score $r$, the shared escalation rate $\rho$, and the correction strength $\lambda$.
Therefore, the resulting model has at most $P_1+Q+2$ parameters.
Applying Theorem~\ref{thm:predictor-floor} to $f_1$ and $g_\lambda$ bounds the excess predictability.
Combining this upper bound with Lemma~\ref{lem:router-to-predictor} bounds the additional weak errors captured.

\begin{theorem}[Suboptimality and capacity bound weak-error capture beyond confidence]\label{thm:weak-error-capture}
Under Assumption~\ref{a:floor}, every budget-$Q$ routing score $r$ and escalation rate $\rho$ induce decisions $B_r$ and $B_f$ whose difference $u=B_r-B_f$ satisfies
\begin{equation}\label{eq:weak-error-capture-bound}
  \boxed{\begin{aligned}
    &\E[u\epsilon_1(X)]
    \le\sqrt{\min\{\rho,1-\rho\}
      \left[\delta_{P_1}(f_1)+\Ly^*(P_1)-\Ly^*(P_1+Q+2)\right]}
  \end{aligned}}.
\end{equation}
\end{theorem}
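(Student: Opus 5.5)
We will chain three facts. Equation~\ref{eq:routing-confidence-residual} bounds the captured weak errors by a residual. A second-order expansion of the correctness loss of $g_\lambda$ in $\lambda$ turns that residual into excess predictability. Theorem~\ref{thm:predictor-floor} then caps the excess predictability.

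Write $p(x)=f_{1,Z}(0\given x)$ and $D=\E[u(\epsilon_1(X)-p(X))]$. If $\E[u\epsilon_1]\le 0$ there is nothing to prove, so assume it is positive; by Equation~\ref{eq:routing-confidence-residual}, $0<\E[u\epsilon_1]\le D$. Consider the one-parameter family $g_\lambda$ of Equation~\ref{eq:routing-confidence-correction}. Conditioning on $X$, the expected correctness loss of a Bernoulli predictor with logit $\ell$ is $\psi(\ell)=\log(1+e^{\ell})-\epsilon_1\ell$, up to the sign convention for $Z=0$. Hence
\[
\Lz(f_1)-\Lz(g_\lambda,f_1)=\E\bigl[\lambda u(\epsilon_1-p)-\bigl(\log(1+e^{\ell+\lambda u})-\log(1+e^{\ell})-\lambda u\,p\bigr)\bigr],
\]
where $\ell=\operatorname{logit}p$. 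The bracketed Bregman remainder is at most $\lambda^2u^2/8$, since $\sigma'\le 1/4$. Thus the excess predictability is at least $\lambda D-\lambda^2\E[u^2]/8$. Optimizing at $\lambda=4D/\E[u^2]$ gives $2D^2/\E[u^2]$; this choice also supplies the $\lambda>0$ of Lemma~\ref{lem:router-to-predictor}.

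Next we bound $\E[u^2]$. Because $u\in\{-1,0,1\}$ and $\E[B_r]=\E[B_f]=\rho$, we have $\E[u^2]=\E|u|=2\Pr(B_r=1,B_f=0)\le 2\min\{\rho,1-\rho\}$. The excess predictability is therefore at least $D^2/\min\{\rho,1-\rho\}$. For the upper bound, note that $g_\lambda$ reuses $f_1$ and adds $r$, $\rho$ and $\lambda$, so recalibration gives a model in $\gF_{P_1+Q+2}$. We apply Theorem~\ref{thm:predictor-floor} with predictor budget $Q+2$, counting $f_1$'s parameters only once as the paper's convention allows. This gives excess predictability at most $\delta_{P_1}(f_1)+\Ly^*(P_1)-\Ly^*(P_1+Q+2)$. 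Combining the two bounds yields $D^2\le\min\{\rho,1-\rho\}[\cdots]$, and since $\E[u\epsilon_1]\le D$ the boxed bound follows.

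The main obstacle is the quadratic lower bound on excess predictability. It requires care with the random tie-breaking, since $u$ may depend on external randomness. To handle this, we let $g_\lambda$ use the realized decision (or $\E[u\given X]$, for which $\E[(\E[u\given X])^2]\le\E[u^2]$ still holds). The sign convention also matters, so that increasing $\operatorname{logit} g(0)$ where $u=1$ moves the loss in the correct direction. The constant must come out exactly as $\min\{\rho,1-\rho\}$: the factor $1/8$ from the curvature bound cancels against the factor $2$ in $\E[u^2]$ to give it.
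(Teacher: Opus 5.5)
Your proposal is correct and follows the paper's proof essentially step for step. The quadratic lower bound $\lambda D-\lambda^2\E[u^2]/8$, maximized at $\lambda=4D/\E[u^2]$, is exactly Lemma~\ref{lem:router-to-predictor}, with your residual $D$ playing the role of the paper's $A$. The cap comes from applying the scaling floor to the recalibrated model in $\gF_{P_1+Q+2}$, as in Theorem~\ref{thm:predictor-floor}, and $\E[u^2]\le 2\min\{\rho,1-\rho\}$ closes the argument. Your handling of tie-breaking, where $g_\lambda$ reads the realized decision, also matches the paper, which treats the tie-breaking draws as auxiliary inputs. The only cosmetic difference is that you reduce to $\E[u\epsilon_1(X)]>0$ at the outset, while the paper treats $\E[u^2]=0$ separately and bounds the absolute value of the residual.
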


At fixed budgets and loss floor, reducing weak-model suboptimality tightens the bound on additional errors captured.
By Equation~\ref{eq:relative-cascade-decomposition}, the same bound applies to cascade accuracy gains when $\E[u\epsilon_2(X)]\ge0$ (Appendix~\ref{app:cascade-advantage}).
Empirically, $\E[u\epsilon_2(X)]$ is nonnegative on average over escalation rates for most evaluated models (Appendix~\ref{app:exp-cascade-results}).

\begin{figure}[!t]
  \centering
  \includegraphics[width=\linewidth]{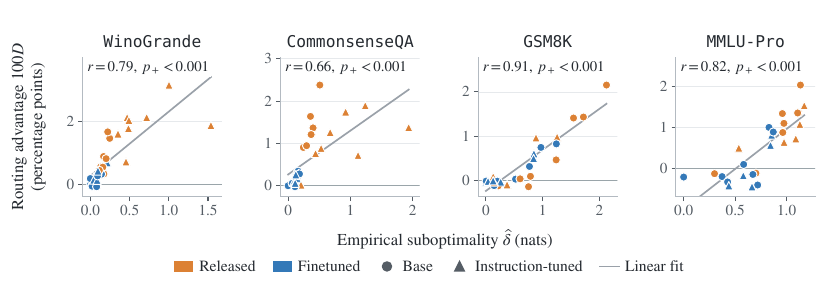}
  \caption{\textbf{More suboptimal models have larger routing advantages.}
  Each point is one released or finetuned weak-model version.
  Empirical suboptimality is the gap to the best observed answer log loss on each task.
  Routing advantage $100D$ is the cascade accuracy gain from a 4B predictor over self-confidence routing to \model{Qwen3.5-122B-A10B}, averaged over equal escalation rates and expressed in percentage points.
  Colors identify training stage; shapes identify the release type.
  Lines are least-squares fits; $r$ is Pearson correlation and $p_+$ its one-sided $p$-value for positive correlation.}
  \label{fig:exp-cascade-advantage}
\end{figure}


\begin{figure}[!t]
  \centering
  \includegraphics[width=\linewidth]{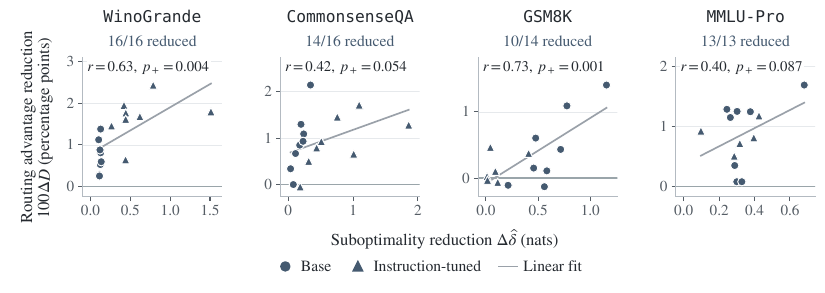}
  \caption{\textbf{Finetuning reduces routing advantage in 53 of 59 model-task pairs.}
  Each point compares one weak model before and after finetuning; both axes show released minus finetuned values.
  $\Delta\widehat\delta$ is the reduction in answer log loss.
  $100\Delta D$ is the reduction in cascading advantage.
  Points above zero have lower routing advantage after finetuning; counts report these pairs.
  Shapes identify the release type; lines are least-squares fits; $r$ is Pearson correlation and $p_+$ its one-sided $p$-value for positive correlation.}
  \label{fig:exp-cascade-reduction}
\end{figure}

\subsection{Experiments}
\label{sec:exp-cascading}

The cascade experiments test a distinct question from predictability: whether predictors rank questions better for escalation and thereby improve cascade accuracy.
For example, Platt calibration with positive slope can reduce correctness log loss relative to raw self-confidence while preserving rankings.
At fixed escalation rates, it therefore leaves routing decisions unchanged and provides no cascade advantage.

\subsubsection{Setup}
Following Section~\ref{sec:cascading}, the weak model $f_1$ answers each question first, and the routing decision determines whether to keep its answer or escalate to the strong model $f_2$.

Each model serves as $f_1$, both before and after finetuning.
We use \model{Qwen3.5-122B-A10B} as $f_2$.
On \dataset{WinoGrande} and \dataset{CommonsenseQA}, we finetune $f_2$ and select the checkpoint with the lowest development answer loss.
On \dataset{GSM8K} and \dataset{MMLU-Pro}, we use its released version, which is already more accurate than all weak models.

We compare the selected 4B predictor $g$ from Section~\ref{sec:exp-setup} with the weak model's self-confidence as sources of routing scores, using error scores $g(0\given x)$ and $f_{1,Z}(0\given x)$.
The thresholding rule from Section~\ref{sec:cascading} gives decisions $B_{g,\rho}$ and $B_{f,\rho}$ at the same escalation rate $\rho$.

The \emph{routing advantage} $D$ measures how much predictor routing improves cascade accuracy over self-confidence routing, averaged over all escalation rates:
\begin{equation}\label{eq:exp-cascade-advantage}
  D=\int_0^1\bigl[\operatorname{Err}(f_{\mathrm{cas},B_{f,\rho}})-\operatorname{Err}(f_{\mathrm{cas},B_{g,\rho}})\bigr]\,d\rho.
\end{equation}
We report $100D$ as the average accuracy gain in percentage points.

\subsubsection{Results}

\begin{wrapfigure}{r}{0.4\textwidth}
  \vspace{-5mm}
  \centering
  \setlength{\abovecaptionskip}{4pt}
  \includegraphics[width=0.95\linewidth]{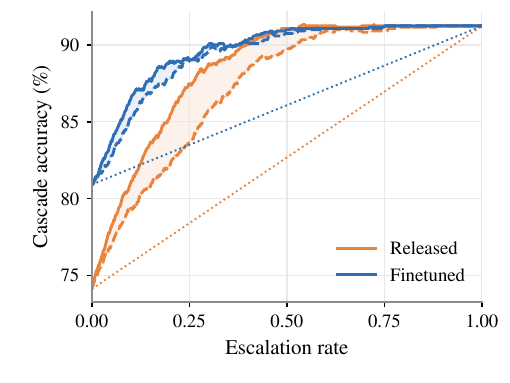}
  \caption{\model{Llama-3.1-8B} cascades on \dataset{CommonsenseQA} before and after finetuning: predictor (solid), confidence (dashed), and random (dotted) routing.
  Shading shows routing advantage $D$.}
  \label{fig:exp-cascade-curves}
  \vspace{-5mm}
\end{wrapfigure}
\textbf{Cascade accuracy across escalation rates.}
For \model{Llama-3.1-8B} on \dataset{CommonsenseQA}, finetuning improves weak-model accuracy and brings self-confidence routing closer to predictor routing (Figure~\ref{fig:exp-cascade-curves}).
Appendix~\ref{app:exp-cascade-curves} shows the curves for all four tasks.

\paragraph{More suboptimal models have larger routing advantages.}
Figure~\ref{fig:exp-cascade-advantage} shows that more suboptimal models have larger routing advantages.
Pearson correlations between empirical suboptimality and routing advantage range from 0.66 to 0.91 across the four tasks.
Appendix~\ref{app:exp-groups} shows released and finetuned models separately.
Compared with excess predictability, routing advantage has a weaker correlation with suboptimality on three of the four tasks.
One possible reason is that finding more weak-model errors does not always improve cascade accuracy: the strong model may also answer these questions incorrectly.
Appendix~\ref{app:exp-cascade-results} separates the routing advantage into additional weak-model errors captured and additional strong-model errors encountered (see Equation~\ref{eq:relative-cascade-decomposition}).
The captured errors alone are at least as strongly associated with suboptimality, with Pearson correlations of 0.80, 0.72, 0.91, and 0.86 in task order.

\paragraph{Finetuning reduces routing advantages.}
Similar to excess predictability, after finetuning, the routing advantage falls in 53 of the 59 retained model-task pairs (Figure~\ref{fig:exp-cascade-reduction}).
Larger reductions in suboptimality tend to accompany larger reductions in routing advantage, with Pearson correlations of 0.40 to 0.73.
Averaged over the retained models, the routing advantage falls from 1.38 to 0.18 percentage points on \dataset{WinoGrande}, 1.09 to 0.18 on \dataset{CommonsenseQA}, 0.52 to 0.20 on \dataset{GSM8K}, and 0.94 to 0.07 on \dataset{MMLU-Pro} (Table~\ref{tab:exp-cascade-split}).

\section{Conclusion}
\label{sec:conclusion}

The value of a separate performance predictor depends on the model it evaluates.
We show that any excess predictability over self-confidence can be turned into an equal reduction in the model's answer loss.
The predictor's advantage is therefore limited by the model's suboptimality plus the capacity the predictor adds, and the same limit governs how many more weak-model errors a router can catch than confidence routing.
In our experiments, more suboptimal models have larger prediction and routing advantages, and finetuning lets self-confidence catch up on both.
Improving the model can thus reduce the need for a separate predictor.

When self-confidence is unavailable, our analysis suggests three sources of predictive value: recovering self-confidence, exploiting suboptimality, and adding capacity.
Recovering self-confidence alone can make a separate predictor useful, even when little room remains to surpass it.

\textbf{Limitations and further directions.}
The theory gives an upper bound, so it does not imply that every reduction in answer loss will reduce a predictor's advantage.
Our cascade bound controls additional weak-model errors captured; bounding the full routing advantage also requires a condition on strong-model errors.
Future work should characterize the strong model's contribution and how to predict the benefit of escalation without running it.
\section*{Acknowledgement}
We would like to thank 
Florian E. Dorner and Nastaran Okati
for helpful discussions and/or feedback on draft versions of this work.


\newpage
\bibliography{references}
\bibliographystyle{plainnat}

\newpage
\appendix
\section{Proofs and Additional Results}
\label{app:proofs}
\subsection{Proofs for Section~\ref{sec:main}}
\label{app:proofs-main}

\begin{proof}[Proof of Proposition~\ref{prop:transfer}]
On a correct prediction, the model assigns the gold answer probability $f_Z(1\given X)$; on an incorrect prediction, it assigns $f_Z(0\given X)\falt(Y\given X)$.
Taking negative logarithms gives
\begin{equation}\label{eq:decomp_lyf}
  -\log f(Y\given X)
  =-\log f_Z(Z\given X)
   +\1\{Z=0\}\big(-\log\falt(Y\given X)\big).
\end{equation}
Taking expectations yields
\begin{equation*}
  \Ly(f)=\Lz(f)+\E\left[\1\{Z=0\}\big(-\log\falt(Y\given X)\big)\right].
\end{equation*}
Recalibration replaces $f_Z$ by $g$ and keeps the same alternative distribution, so Equation~\ref{eq:def_recal} gives
\begin{equation}\label{eq:decomp_lyrecal}
\begin{split}
  \Ly\big(\recal{f}\big)
  &=\E[-\log\recal{f}(Y\given X)]\\
  &=\Lz(g,f)+\E\left[\1\{Z=0\}\big(-\log\falt(Y\given X)\big)\right].
\end{split}
\end{equation}
Subtracting the two expected-loss decompositions cancels the common alternative-loss term and proves the result.
\end{proof}

\begin{proof}[Proof of Theorem~\ref{thm:predictor-floor}]
The recalibrated model has budget at most $P+Q$, so the scaling floor and
Proposition~\ref{prop:transfer} give
\begin{equation}
  \Ly^{*}(P+Q)
  \le\Ly\big(\recal{f}\big)
  =\Ly(f)-\big[\Lz(f)-\Lz(g,f)\big].
\end{equation}
Substituting Equation~\ref{eq:excess-loss} and rearranging proves
Equation~\ref{eq:edge-cap}.
\end{proof}

\newpage
\subsection{No Improvement Without a Better Component}
\label{app:better-component}
This section states and proves the result summarized after Proposition~\ref{prop:transfer}.
With an effective predictor, recalibration improves the correctness probabilities while preserving the alternative distribution.
To see what a further improvement would require, consider any model $h(y\given x)$ and split its probabilities according to whether $f$'s answer is correct:
\begin{equation}\label{eq:h-split}
  h_Z(1\given x)=h(\pred(x)\given x),
  \qquad
  h_Z(0\given x)=1-h_Z(1\given x).
\end{equation}
Every model $h$ can be viewed as answering two questions: (i) whether $f$'s prediction is correct and (ii) if it is wrong, which alternative answer is correct.

The quantity $h_Z(1\given x)$ answers the first question.
Once $\pred(x)$ has been ruled out, the distribution ${h(y\given x)}/{h_Z(0\given x)}$ answers the second. 
Define the conditional loss over the alternatives:
\begin{equation}\label{eq:conditional-alt-loss}
  L_{Y\mid Z=0}(h)
  \coloneqq
  \E\left[
    -\log\frac{h(Y\given X)}{h_Z(0\given X)}
    \mathrel{\Big|} Z=0
  \right].
\end{equation}
This is the log loss for the second question, evaluated only when $f$ is wrong.
In binary classification there is only one alternative answer, so $L_{Y\mid Z=0}(h)=0$ for every $h$.
With more than two answers, this term measures how well $h$ distinguishes among the alternatives.

\begin{proposition}[No improvement without a better component]
\label{prop:better-component}
For every model $h$,
\begin{align}
  \Ly(h)
    &=\Lz(h_Z,f)+\Pr(Z=0)L_{Y\mid Z=0}(h),\label{eq:h-components}\\
  \Ly\big(\recal{f}\big)
    &=\Lz(g,f)+\Pr(Z=0)L_{Y\mid Z=0}(f).\label{eq:recal-components}
\end{align}
If the four component losses are finite and $\Ly(h)<\Ly\big(\recal{f}\big)$, then
\begin{equation}\label{eq:better-component}
  \Lz(h_Z,f)<\Lz(g,f)
  \qquad\text{or}\qquad
  L_{Y\mid Z=0}(h)<L_{Y\mid Z=0}(f).
\end{equation}
\end{proposition}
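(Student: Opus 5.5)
The plan is to reuse the pointwise decomposition from the proof of Proposition~\ref{prop:transfer}, now applied to an arbitrary model $h$ but always split along $f$'s correctness indicator $Z$.

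\textbf{Decomposition for $h$.} For any $x$ and $y$, write $h(y\given x)$ in terms of the split in Equation~\ref{eq:h-split}. If $y=\pred(x)$, then $h(y\given x)=h_Z(1\given x)$. If $y\neq\pred(x)$, then
\[
h(y\given x)=h_Z(0\given x)\cdot \frac{h(y\given x)}{h_Z(0\given x)}.
\]
This holds whenever $h_Z(0\given x)>0$. Where $h_Z(0\given x)=0$ and $Z=0$, both sides of the loss identity are infinite, and finiteness of the component losses excludes this case on a positive-probability event. Taking negative logarithms gives, almost surely,
\[
-\log h(Y\given X)=-\log h_Z(Z\given X)+\1\{Z=0\}\Bigl(-\log\frac{h(Y\given X)}{h_Z(0\given X)}\Bigr).
\]
Taking expectations and writing $\E[\1\{Z=0\}W]=\Pr(Z=0)\,\E[W\given Z=0]$ yields Equation~\ref{eq:h-components}. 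If $\Pr(Z=0)=0$, the second term vanishes and the conditional loss is irrelevant. The expectation splits into two sums because both terms are nonnegative.

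\textbf{Recalibrated model.} Next I apply the same identity with $h=\recal{f}$. By Equation~\ref{eq:def_recal}, $(\recal{f})_Z(1\given x)=g(1\given x)$, so $(\recal{f})_Z=g$. Also, for $y\neq\pred(x)$,
\[
\frac{\recal{f}(y\given x)}{g(0\given x)}=\falt(y\given x)=\frac{f(y\given x)}{f_Z(0\given x)}.
\]
Hence $L_{Y\mid Z=0}(\recal{f})=L_{Y\mid Z=0}(f)$, which gives Equation~\ref{eq:recal-components}. The assumption $f_Z(0\given x)>0$ used in Equation~\ref{eq:def_falt} makes these ratios well-defined.

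\textbf{The implication.} I argue by contraposition. Suppose $\Lz(h_Z,f)\ge\Lz(g,f)$ and $L_{Y\mid Z=0}(h)\ge L_{Y\mid Z=0}(f)$. Since $\Pr(Z=0)\ge0$ and all four quantities are finite, adding the two inequalities with weights $1$ and $\Pr(Z=0)$ gives $\Ly(h)\ge\Ly(\recal{f})$. This contradicts the hypothesis, so at least one of the strict inequalities in Equation~\ref{eq:better-component} must hold.

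\textbf{Main obstacle.} The only delicate step is measure-theoretic bookkeeping. I need to handle points where $h_Z(0\given x)=0$ or $\Pr(Z=0)=0$, and I need the finiteness assumptions so that the subtraction and comparison are legitimate rather than of the form $\infty-\infty$. I expect to dispatch these briefly, using the finiteness hypotheses and the convention $0\cdot\infty=0$ on null events.
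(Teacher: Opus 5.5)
Your proposal is correct and follows the paper's proof. You factor $h$ along $f$'s correctness into $h_Z$ and the conditional alternative distribution, observe that $(\recal{f})_Z=g$ while the alternatives of $\recal{f}$ are exactly $\falt$, and compare the two decompositions to reach the contradiction. Your extra bookkeeping (nonnegativity of both terms to split the expectation, and the degenerate events) goes beyond what the paper writes, with one small correction: the identities are claimed for every $h$, not only under the finiteness hypothesis. On the event $\{h_Z(0\given X)=0, Z=0\}$ you should therefore appeal to a convention under which both sides equal $+\infty$, rather than to finiteness, which is assumed only for the implication.
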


\begin{proof}[Proof of Proposition~\ref{prop:better-component}]
Factor $h$ into its probability that $f$ is correct and its distribution over alternative answers, then take expected negative logarithms to obtain Equation~\ref{eq:h-components}.
Recalibration changes only the correctness probabilities, giving Equation~\ref{eq:recal-components}.
If neither inequality in Equation~\ref{eq:better-component} held, comparing the two decompositions would give $\Ly(h)\ge\Ly\big(\recal{f}\big)$, a contradiction.
\end{proof}

Thus beating recalibration requires better performance predictions, better predictions among the alternative answers, or both.

\newpage
\subsection{Small Predictors Under Diminishing Returns}
\label{app:diminishing-returns}

When the predictor is much smaller than the model, its excess predictability is bounded by the model's suboptimality plus a small capacity correction.
The following corollary makes this precise under diminishing returns, expressed by a convex loss floor.

\begin{corollary}[Small predictors have little advantage beyond model suboptimality]
\label{cor:diminishing-returns}
Under Assumption~\ref{a:floor}, suppose $\Ly^{*}$ is convex.
For $P>0$ and $Q\ge0$, every $f\in\gF_P$ with $\Ly(f)<\infty$ and every $g\in\gG_Q$ satisfy
\begin{equation}\label{eq:diminishing-returns}
  \Lz(f)-\Lz(g,f)
  \le
  \frac{2Q}{P}\big[\Ly^{*}(P/2)-\Ly^{*}(P)\big]
  +\delta_P(f).
\end{equation}
\end{corollary}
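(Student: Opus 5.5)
The plan is to start from Theorem~\ref{thm:predictor-floor}, which already gives
\[
\Lz(f)-\Lz(g,f)\le \big[\Ly^{*}(P)-\Ly^{*}(P+Q)\big]+\delta_P(f).
\]
It then suffices to show that the scaling gain satisfies
\[
\Ly^{*}(P)-\Ly^{*}(P+Q)\le \frac{2Q}{P}\big[\Ly^{*}(P/2)-\Ly^{*}(P)\big].
\]
This is a purely one-dimensional statement about a convex, non-increasing function of the budget, so the remainder is real analysis.

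The key step is the standard slope inequality for convex functions. For points $a<b<c$, the secant slopes are non-decreasing:
\[
\frac{\Ly^{*}(b)-\Ly^{*}(a)}{b-a}\le\frac{\Ly^{*}(c)-\Ly^{*}(b)}{c-b}.
\]
Apply it with $a=P/2$, $b=P$, $c=P+Q$, assuming $Q>0$; the case $Q=0$ is trivial because the scaling gain vanishes. Multiplying the negated inequality by $Q$ gives
\[
\Ly^{*}(P)-\Ly^{*}(P+Q)\le \frac{Q}{P/2}\big[\Ly^{*}(P/2)-\Ly^{*}(P)\big]=\frac{2Q}{P}\big[\Ly^{*}(P/2)-\Ly^{*}(P)\big].
\]
Substituting this into Theorem~\ref{thm:predictor-floor} yields Equation~\ref{eq:diminishing-returns}.

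The only points needing care are domain and finiteness issues. Convexity must be understood on $[0,\infty)$ (or at least on $[P/2,P+Q]$), and $\Ly^{*}$ must be finite at $P/2$, $P$, and $P+Q$ for the subtractions to make sense. Finiteness at $P$ and $P+Q$ follows from $\Ly^{*}(P+Q)\le\Ly^{*}(P)\le\Ly(f)<\infty$. At $P/2$ the value could a priori be $+\infty$, but then the bound holds trivially. I expect this bookkeeping, rather than any substantive step, to be the main obstacle. The non-increasing property of the floor, noted after Assumption~\ref{a:floor}, ensures both brackets are non-negative. This confirms the interpretation that for $Q\ll P$ the capacity correction is small.
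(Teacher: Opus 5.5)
Your proposal is correct and matches the paper's proof: both apply Theorem~\ref{thm:predictor-floor} and bound the scaling gain $\Ly^{*}(P)-\Ly^{*}(P+Q)$ with the convex secant-slope inequality on $P/2<P<P+Q$, which is the paper's ``loss reduction per parameter'' step. Your extra care about $Q=0$ and a possibly infinite $\Ly^{*}(P/2)$ does no harm and is slightly more careful than the paper.
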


The capacity correction is $2Q/P$ times the decrease in the loss floor from $P/2$ to $P$.
When $Q\ll P$, this correction is small, so the excess predictability is bounded essentially by $\delta_P(f)$.
Thus a nearly optimal model leaves little room for a much smaller predictor to outperform its self-confidence.

\begin{proof}[Proof of Corollary~\ref{cor:diminishing-returns}]
By convexity, the loss reduction per parameter over $[P,P+Q]$ is no greater than over $[P/2,P]$, giving
\begin{equation*}
  \Ly^{*}(P)-\Ly^{*}(P+Q)
  \le\frac{2Q}{P}\big[\Ly^{*}(P/2)-\Ly^{*}(P)\big].
\end{equation*}
Substituting this bound into Equation~\ref{eq:edge-cap} proves the result.
\end{proof}

\newpage
\subsection{Suboptimality Does Not Determine Excess Predictability}
\label{sec:tightness}
Proposition~\ref{prop:transfer} shows that a reduction in correctness loss gives an equal reduction in answer loss.
Theorem~\ref{thm:predictor-floor} therefore bounds excess predictability by the answer-loss reduction allowed by capacity gain and suboptimality.

However, lowering answer loss does not necessarily lower correctness loss.
We can decompose answer loss as
\begin{equation}\label{eq:decomp_lyf_expectation}
  \Ly(f)=\Lz(f)+\Pr(Z=0)\Lalt(f),
\end{equation}
where $\Lalt(f)\coloneqq\E[-\log\falt(Y\given X)\given Z=0]$ is the loss over alternative answers when the model's chosen answer is wrong.
A model can lower this alternative loss while keeping its chosen answer and self-confidence unchanged.
At a fixed parameter budget, this reduces suboptimality without changing the correctness-prediction problem.

\begin{proposition}[Lower suboptimality with unchanged maximum excess predictability]
\label{prop:alternative-invariance}
Let $f\in\gF_P$ have finite answer loss and $\Pr(Z=0)>0$.
For $\lambda\in(0,1)$, let $h\in\gF_P$ keep $f$'s point prediction and self-confidence and mix its alternatives with the uniform distribution,
\begin{equation}\label{eq:alternative-mixing}
  h_{\mathrm{Alt}}(y\given x)=(1-\lambda)\falt(y\given x)+\frac{\lambda}{K-1}.
\end{equation}
Then $f$ and $h$ have the same maximum excess predictability at every predictor budget,
\begin{equation}\label{eq:alternative-invariance}
  \sup_{g\in\gG_Q}\big[\Lz(h)-\Lz(g,h)\big]=\sup_{g\in\gG_Q}\big[\Lz(f)-\Lz(g,f)\big],
\end{equation}
while $\delta_P(h)<\delta_P(f)$ if $\Lalt(f)>\log(K-1)$.
\end{proposition}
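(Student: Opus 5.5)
Two claims need proof: the equality of the two suprema, and the strict decrease in suboptimality. Both follow from the decomposition in Equation~\ref{eq:decomp_lyf_expectation}.

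\textbf{Equality of the suprema.} By construction $h$ has the same point prediction as $f$, $\hat h(x)=\pred(x)$, and the same self-confidence, $h_Z(1\given x)=f_Z(1\given x)$. Hence the correctness indicator $Z=\1\{Y=\pred(X)\}$ is identical for the two models, and so is the baseline, $\Lz(h)=\Lz(f)$. For any predictor $g$ the correctness loss $\Lz(g,\cdot)=\E[-\log g(Z\given X)]$ depends on the model only through $Z$, so $\Lz(g,h)=\Lz(g,f)$.

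One subtlety is that $g$ "may read" the model's output, so the set $\gG_Q$ of admissible predictors could in principle depend on the model. I would handle this by noting that any budget-$Q$ predictor reading $f$ has a counterpart reading $h$ with identical $g(\cdot\given x)$. The reason is that $h$'s point prediction and self-confidence coincide with $f$'s, and these are the only inputs relevant to $Z$. Alternatively, one can simply treat $g$ as a function of $x$. In either case the two objectives agree termwise, so the suprema are equal.

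\textbf{Strict decrease in suboptimality.} Since $f$ and $h$ share the floor $\Ly^*(P)$, we have $\delta_P(h)-\delta_P(f)=\Ly(h)-\Ly(f)$. By Equation~\ref{eq:decomp_lyf_expectation} and $\Lz(h)=\Lz(f)$, this equals $\Pr(Z=0)\bigl[\Lalt(h)-\Lalt(f)\bigr]$. Because $\Pr(Z=0)>0$, it suffices to show $\Lalt(h)<\Lalt(f)$. Here the alternative distribution is supported on the $K-1$ answers other than $\pred(x)$, and $u(y)=1/(K-1)$ denotes the uniform distribution on them.

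By convexity of $-\log$, pointwise
\[
-\log\bigl((1-\lambda)\falt(y\given x)+\lambda u(y)\bigr)\le(1-\lambda)\bigl(-\log\falt(y\given x)\bigr)+\lambda\log(K-1).
\]
Taking the conditional expectation given $Z=0$ yields
\[
\Lalt(h)\le(1-\lambda)\Lalt(f)+\lambda\log(K-1).
\]
The right-hand side is strictly less than $\Lalt(f)$ exactly when $\Lalt(f)>\log(K-1)$. Finiteness of $\Lalt(f)$ follows from $\Ly(f)<\infty$ together with the decomposition, since $\Lz(f)\ge 0$.

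\textbf{Main obstacle.} The only step needing care is the predictor-class subtlety above: making precise that the admissible budget-$Q$ predictors are the same, or in bijection, for $f$ and $h$ under the convention that $g$ may read the model's output. The remaining steps are routine given the decomposition.
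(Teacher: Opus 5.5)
Your proof of the suboptimality claim is the paper's argument (Equation~\ref{eq:decomp_lyf_expectation} plus convexity of $-\log$), and your finiteness remark is correct. The gap is in the equality of the suprema, at exactly the step you flagged.

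You justify the counterpart predictor by saying that $h$'s point prediction and self-confidence coincide with $f$'s, and that ``these are the only inputs relevant to $Z$.'' This conflates the target with the predictor's inputs. It is true that $Z$ is determined by $\pred$ and $Y$. However, a predictor in $\gG_Q$ that reads $f$'s output may use the whole vector $f(\cdot\given x)$, in particular $\falt(\cdot\given x)$, as evidence about $Z$; the paper's own predictors take all option probabilities as input. Since $h_{\mathrm{Alt}}\neq\falt$, a predictor that sees only $h$'s point prediction and self-confidence cannot in general reproduce such a $g$. A telltale sign is that your argument never uses $\lambda<1$. It would apply verbatim to $\lambda=1$, where $h_{\mathrm{Alt}}$ is uniform and $\falt$ is erased from $h$'s output. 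There, predictors exploiting $f$'s alternative probabilities need not have any counterpart reading $h$ within budget $Q$.

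The missing idea is that the mixing is invertible for $\lambda<1$:
\[
\falt(y\given x)=\bigl(h_{\mathrm{Alt}}(y\given x)-\lambda/(K-1)\bigr)/(1-\lambda).
\]
So the outputs of $f$ and $h$ on $x$ determine each other through fixed arithmetic, which adds no parameters under the paper's counting convention. Hence every $g$ reading $f$'s output can be computed from $h$'s output within the same budget. The converse also holds, and you need it for equality, but you do not state it. Because $Z$ and the baselines coincide, matched predictors have equal correctness loss, which gives Equation~\ref{eq:alternative-invariance}.

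Your fallback of treating $g$ simply as a function of $x$ does not repair this. It replaces the model-dependent class of predictors, which read the model's output for free, with a model-independent one. That makes the equality trivial, but it is not the proposition in the paper's framework.
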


At fixed parameter budgets, the bound in Theorem~\ref{thm:predictor-floor} decreases with suboptimality, but the maximum excess predictability need not decrease with it.
Models with the same maximum excess predictability can therefore differ in suboptimality by their alternative loss alone.

With two answers, alternative loss is zero, so this source of separation is absent.
As $K$ grows, the alternative distribution has more degrees of freedom, allowing changes in answer loss that correctness prediction does not capture.

\begin{proof}[Proof of Proposition~\ref{prop:alternative-invariance}]
Mixing with a fixed constant is a fixed arithmetic operation, so $h\in\gF_P$; the common point prediction gives the same $Z$, and the common self-confidence gives $\Lz(h)=\Lz(f)$.
For $\lambda<1$, Equation~\ref{eq:alternative-mixing} is invertible, so the outputs of $f$ and $h$ on $x$ determine each other through fixed arithmetic.
Any $g\in\gG_Q$ reading $f$'s output therefore has a counterpart in $\gG_Q$ reading $h$'s output with the same correctness loss, and conversely, which proves Equation~\ref{eq:alternative-invariance}.
Applying Equation~\ref{eq:decomp_lyf_expectation} to $f$ and $h$ and subtracting gives
\begin{equation}\label{eq:alternative-suboptimality}
  \delta_P(f)-\delta_P(h)=\Pr(Z=0)\big[\Lalt(f)-\Lalt(h)\big].
\end{equation}
Since $-\log$ is convex, $\Lalt(h)\le(1-\lambda)\Lalt(f)+\lambda\log(K-1)$, which is below $\Lalt(f)$ when $\Lalt(f)>\log(K-1)$.
\end{proof}

\newpage
\subsection{Routing Induces Performance Prediction}
\label{app:routing-to-prediction}

The following lemma quantifies the prediction gain supplied by a router that captures additional weak-model errors.
It provides the supporting result for the proof of Theorem~\ref{thm:weak-error-capture} in Appendix~\ref{app:proofs-cascading}.

We use the notation of Section~\ref{sec:cascading}, suppressing $X$ in $B_r$, $B_f$, and $u$.
Independent tie-breaking draws are treated as auxiliary inputs, suppressed in the notation and adding no parameters; all expectations include this randomness.
For each model, correctness and the actual error probability are
\begin{equation}\label{eq:routing-error-probability}
  Z_i=\1\{Y=\pred_i(X)\},
  \quad
  \epsilon_i(x)=\Pr(Z_i=0\given X=x),\quad i=1,2.
\end{equation}
At escalation rate $\rho$, the routing decision satisfies
\begin{equation}\label{eq:routing-threshold-decision}
  B_{r,\rho}(x)\in\{0,1\},
  \qquad \E[B_{r,\rho}(X)]=\rho.
\end{equation}
The confidence decision also satisfies $\E[B_f]=\rho$, and we write $u=B_r-B_f$.

At a fixed escalation rate, $B_f$ escalates the lowest-confidence $\rho$ fraction of queries and therefore maximizes the predicted weak errors captured, $\E[B_f f_{1,Z}(0\given X)]$.
Hence
\begin{equation}\label{eq:confidence-routing-optimal}
  \E[u\,f_{1,Z}(0\given X)]\le0.
\end{equation}
As in Equation~\ref{eq:routing-confidence-residual}, additional weak-error capture is therefore bounded by the confidence residual $\E[u(\epsilon_1(X)-f_{1,Z}(0\given X))]$.
To convert this residual into a prediction gain, use the logit correction $g_\lambda$ defined in Equation~\ref{eq:routing-confidence-correction}.
For $\lambda>0$, this raises predicted error where $B_r=1$ and $B_f=0$, and lowers it where $B_r=0$ and $B_f=1$.
At $\lambda=0$, it recovers confidence.

\begin{lemma}[Routing induces performance prediction beyond confidence]
\label{lem:router-to-predictor}
For any routing score $r$ at a fixed rate $\rho$ with $\E[u^2]>0$, the predictors in Equation~\ref{eq:routing-confidence-correction} satisfy
\begin{equation}\label{eq:router-induced-predictor-loss}
  \sup_{\lambda\in\mathbb R}\bigl[\Lz(f_1)-\Lz(g_\lambda,f_1)\bigr]
  \ge\frac{2\bigl(\E[u(\epsilon_1(X)-f_{1,Z}(0\given X))]\bigr)^2}{\E[u^2]}.
\end{equation}
If $\E[u\epsilon_1(X)]>0$, then there exists $\lambda>0$ such that $\Lz(g_\lambda,f_1)<\Lz(f_1)$.
If $\E[u^2]=0$, then $g_\lambda=f_{1,Z}$ almost surely for every $\lambda$.
\end{lemma}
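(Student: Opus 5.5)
We reduce the lemma to a one-dimensional concavity argument in $\lambda$. Write $p(x)=f_{1,Z}(0\given x)$ and $\epsilon(x)=\epsilon_1(x)$, and let $\phi(\lambda)=\Lz(f_1)-\Lz(g_\lambda,f_1)$. Conditioning on $X$ (together with the tie-breaking draws, which fix $u$), the correctness loss of a predictor with error probability $q$ is the binary cross-entropy $-\epsilon\log q-(1-\epsilon)\log(1-q)$. Since $u\in\{-1,0,1\}$ is fixed given $X$ and the auxiliary randomness, we have $g_\lambda(0\given x)=\sigma(\operatorname{logit}p(x)+\lambda u(x))$. Thus $\phi(0)=0$. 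Moreover, $\phi$ is a smooth concave function of $\lambda$, because binary cross-entropy is convex in the logit. We will justify differentiating under the expectation using $\Ly(f_1)<\infty$ and $p\in(0,1)$ a.s.; the derivatives below are bounded by $|u|$ and $u^2$.

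The first step is to compute the derivatives. For binary cross-entropy in the logit $\ell$, the derivative of the loss is $\sigma(\ell)-\epsilon$. This gives
\[
\phi'(\lambda)=\E\bigl[u(\epsilon-\sigma(\operatorname{logit}p+\lambda u))\bigr],
\qquad \phi'(0)=\E[u(\epsilon-p)],
\]
and
\[
\phi''(\lambda)=-\E\bigl[u^2\,\sigma'(\cdot)\bigr]\ge-\tfrac14\E[u^2],
\]
since $\sigma'\le 1/4$. Taylor's theorem with this uniform curvature bound then yields
\[
\phi(\lambda)\ge\lambda\,\E[u(\epsilon-p)]-\tfrac{\lambda^2}{8}\E[u^2].
\]
Maximizing over $\lambda$ at $\lambda^*=4\E[u(\epsilon-p)]/\E[u^2]$ gives
\[
\sup_\lambda\phi(\lambda)\ge\frac{2(\E[u(\epsilon-p)])^2}{\E[u^2]},
\]
which is exactly Equation~\ref{eq:router-induced-predictor-loss}.

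For the second claim, assume $\E[u\epsilon]>0$. By Equation~\ref{eq:confidence-routing-optimal}, $\E[up]\le0$, so $\E[u(\epsilon-p)]\ge\E[u\epsilon]>0$. In particular $u\not\equiv0$, so $\E[u^2]>0$. Then $\phi'(0)>0$, and there is a small $\lambda>0$ with $\phi(\lambda)>0$; explicitly, $\lambda^*>0$ gives a strictly positive lower bound. For the third claim, $\E[u^2]=0$ means $u=0$ a.s., so $\operatorname{logit}g_\lambda=\operatorname{logit}p$ a.s. for every $\lambda$.

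The main obstacle is Equation~\ref{eq:confidence-routing-optimal} together with the role of ties. We need $B_f$ to maximize $\E[Bp]$ over all decisions with mean $\rho$, including randomized ones. The argument is a rearrangement (Neyman–Pearson) step: $B_f$ is $1$ above a threshold $t$, $0$ below it, and has mean $\rho$. So $\E[(B-B_f)(p-t)]\le0$ for any $B$ with values in $[0,1]$, and hence $\E[(B-B_f)p]\le t\,\E[B-B_f]=0$. Random tie-breaking only affects points with $p=t$, where the integrand vanishes. The remaining care is to state the integrability needed to differentiate under the expectation. This can be handled by working with the convex function directly and using the exact quadratic lower bound from the $1/4$-smoothness of the log-partition, which avoids interchange issues.
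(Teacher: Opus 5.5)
Your proposal is correct and follows essentially the same route as the paper: a quadratic lower bound $\lambda A-\lambda^2\E[u^2]/8$ from the $1/4$ curvature bound on the logistic loss, optimized at $\lambda_*=4A/\E[u^2]$ with $A=\E[u(\epsilon_1(X)-f_{1,Z}(0\given X))]$, together with $A\ge\E[u\epsilon_1(X)]$ for the second claim. The only differences are minor: the paper applies Taylor's theorem per query before averaging (the interchange-free route you mention at the end), and your threshold argument for $\E[u\,f_{1,Z}(0\given X)]\le 0$ spells out a step the paper only asserts.
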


\begin{proof}[Proof of Lemma~\ref{lem:router-to-predictor}]
If $\E[u^2]=0$, then $u=0$ almost surely, so Equation~\ref{eq:routing-confidence-correction} gives $g_\lambda=f_{1,Z}$ for every $\lambda$.
Henceforth assume $\E[u^2]>0$.
We first bound the loss after shifting the confidence logit, then choose the shift that gives the largest guaranteed improvement.

\emph{Step 1: bound the change in loss on each query.}
Fix a query $x$ and a realization of the tie-breaking randomness, which is independent of the answer given $x$.
Write $a(x)=\operatorname{logit}f_{1,Z}(0\given x)$.
For a predicted error logit $s$, let $\sigma(s)=1/(1+e^{-s})$ be the corresponding error probability.
Since the weak model is wrong with probability $\epsilon_1(x)$, the conditional expected binary log loss is
\begin{equation*}
  \begin{aligned}
    \ell_x(s)
    &=-\epsilon_1(x)\log\sigma(s)
      -[1-\epsilon_1(x)]\log[1-\sigma(s)]\\
    &=\log(1+e^s)-\epsilon_1(x)s.
  \end{aligned}
\end{equation*}
Differentiating with respect to $s$ gives
\begin{equation*}
  \ell_x'(s)=\sigma(s)-\epsilon_1(x),
  \qquad
  \ell_x''(s)=\sigma(s)[1-\sigma(s)]\le\frac14.
\end{equation*}
Taylor's theorem therefore bounds the loss at any shift $h\in\mathbb R$ by
\begin{equation*}
  \ell_x(a(x)+h)
  \le\ell_x(a(x))+[f_{1,Z}(0\given x)-\epsilon_1(x)]h+\frac{h^2}{8}.
\end{equation*}
The factor $1/8$ is the Taylor factor $1/2$ times the curvature bound $1/4$.
Substituting $h=\lambda u(x)$ and rearranging yields
\begin{equation*}
  \ell_x(a(x))-\ell_x(a(x)+\lambda u(x))
  \ge\lambda u(x)[\epsilon_1(x)-f_{1,Z}(0\given x)]-\frac{\lambda^2u(x)^2}{8}.
\end{equation*}

\emph{Step 2: average over queries and tie-breaking.}
The two losses on the left are the conditional losses of $f_{1,Z}$ and $g_\lambda$, respectively.
Taking expectations over both queries and tie-breaking thus gives
\begin{equation}\label{eq:routing-correction-quadratic}
  \Lz(f_1)-\Lz(g_\lambda,f_1)
  \ge\lambda\E[u(\epsilon_1(X)-f_{1,Z}(0\given X))]
    -\frac{\lambda^2}{8}\E[u^2].
\end{equation}
These expected losses are finite: $\Lz(f_1)\le\Ly(f_1)<\infty$, and the displayed upper bound, together with $|u|\le1$, bounds the corrected loss for every finite $\lambda$.

\emph{Step 3: choose the correction strength.}
For this optimization, abbreviate the confidence residual and the squared routing difference as
\begin{equation*}
  A=\E[u(\epsilon_1(X)-f_{1,Z}(0\given X))],
  \qquad V=\E[u^2]>0.
\end{equation*}
Completing the square in the right-hand side of Equation~\ref{eq:routing-correction-quadratic} gives
\begin{equation*}
  \lambda A-\frac{\lambda^2V}{8}
  =\frac{2A^2}{V}-\frac{V}{8}\left(\lambda-\frac{4A}{V}\right)^2.
\end{equation*}
Thus $\lambda_*=4A/V$ guarantees excess predictability of at least $2A^2/V$.
The supremum over all $\lambda$ is at least the excess predictability at $\lambda_*$, proving Equation~\ref{eq:router-induced-predictor-loss}.
Finally, if $B_r$ captures more weak-model errors, then Equation~\ref{eq:routing-confidence-residual} gives $A\ge\E[u\epsilon_1(X)]>0$.
Consequently $\lambda_*>0$ and $2A^2/V>0$, so this correction is an effective performance predictor.
\end{proof}

\newpage
\subsection{Proofs for Section~\ref{sec:cascading}}
\label{app:proofs-cascading}
\label{app:weak-error-capture}

Using the notation and supporting lemma from Appendix~\ref{app:routing-to-prediction}, we prove Theorem~\ref{thm:weak-error-capture} by combining the prediction gain with the loss-floor bound from Theorem~\ref{thm:predictor-floor}.

\begin{proof}[Proof of Theorem~\ref{thm:weak-error-capture}]
Recalibration with $g_\lambda$ reuses $f_1$ and adds the budget-$Q$ routing score $r$, the shared escalation rate $\rho$, and the correction strength $\lambda$.
The resulting model has at most $P_1+Q+2$ parameters and belongs to $\gF_{P_1+Q+2}$.
Exact transfer and the loss floor therefore bound the excess predictability for every $\lambda$, as in Theorem~\ref{thm:predictor-floor}:
\begin{equation}\label{eq:routing-predictor-edge-cap}
  \begin{aligned}
    \Lz(f_1)-\Lz(g_\lambda,f_1)
    &\le\delta_{P_1}(f_1)+\Ly^*(P_1)-\Ly^*(P_1+Q+2).
  \end{aligned}
\end{equation}
Write the upper bound on the excess predictability as
\begin{equation*}
  \Delta=\delta_{P_1}(f_1)+\Ly^*(P_1)-\Ly^*(P_1+Q+2).
\end{equation*}
This quantity is nonnegative because suboptimality is nonnegative and the loss floor is non-increasing in the parameter budget.
If $\E[u^2]=0$, the routing decisions agree almost surely and the left side is zero.
Now assume $\E[u^2]>0$.

\emph{Step 1: bound the confidence residual using the excess predictability.}
Lemma~\ref{lem:router-to-predictor} gives a lower bound on the maximum excess predictability obtained by correcting confidence along $u$.
Equation~\ref{eq:routing-predictor-edge-cap} bounds the excess predictability of every such correction by $\Delta$, so it also bounds their supremum.
Together, these bounds give
\begin{equation*}
  \frac{2\bigl(\E[u(\epsilon_1(X)-f_{1,Z}(0\given X))]\bigr)^2}{\E[u^2]}
  \le\sup_{\lambda\in\mathbb R}
    [\Lz(f_1)-\Lz(g_\lambda,f_1)]
  \le\Delta.
\end{equation*}
Multiplying by $\E[u^2]/2$ and taking nonnegative square roots yields
\begin{equation*}
  \left|\E[u(\epsilon_1(X)-f_{1,Z}(0\given X))]\right|
  \le\sqrt{\frac{\E[u^2]}{2}\,\Delta}.
\end{equation*}
By Equation~\ref{eq:routing-confidence-residual}, this also bounds the additional weak errors captured, $\E[u\epsilon_1(X)]$.

\emph{Step 2: bound the difference between routing decisions using the escalation rate.}
Since $B_r,B_f\in\{0,1\}$, we have $u^2=|B_r-B_f|=\1\{B_r\ne B_f\}$.
The absolute difference is at most both $B_r+B_f$ and $(1-B_r)+(1-B_f)$, so
\begin{equation*}
  u^2\le B_r+B_f,
  \qquad
  u^2\le(1-B_r)+(1-B_f).
\end{equation*}
Taking expectations and using $\E[B_r]=\E[B_f]=\rho$ gives both
$\E[u^2]\le2\rho$ and $\E[u^2]\le2(1-\rho)$.
Combining the two steps gives
\begin{equation*}
  \E[u\epsilon_1(X)]
  \le\sqrt{\frac{\E[u^2]}{2}\,\Delta}
  \le\sqrt{\min\{\rho,1-\rho\}\Delta}.
\end{equation*}
Substituting the definition of $\Delta$ proves Equation~\ref{eq:weak-error-capture-bound}.
\end{proof}

\newpage
\subsection{Cascade Advantage over Confidence Routing}
\label{app:cascade-advantage}

To pass from weak-error capture to cascade performance, we must also account for strong-model errors on escalated queries.
The cascade prediction and error rate are
\begin{equation}\label{eq:routing-cascade}
  \hat f_{\mathrm{cas},B_r}(x)
    =(1-B_r(x))\pred_1(x)+B_r(x)\pred_2(x),
\end{equation}
\begin{equation}\label{eq:routing-cascade-error}
  \operatorname{Err}(f_{\mathrm{cas},B_r})
  =\operatorname{Err}(f_1)-\E\big[B_r\big(\epsilon_1(X)-\epsilon_2(X)\big)\big].
\end{equation}
Subtracting this identity for confidence routing gives Equation~\ref{eq:relative-cascade-decomposition}.
If the learned router encounters at least as many strong-model errors as confidence routing, its cascade advantage is at most its additional weak-error capture.

\begin{corollary}[Cascade advantage over confidence routing]\label{cor:cascade-improvement}
Under the conditions of Theorem~\ref{thm:weak-error-capture}, if $\E[u\epsilon_2(X)]\ge0$, then
\begin{equation}\label{eq:cascade-improvement-bound}
  \operatorname{Err}(f_{\mathrm{cas},B_f})-\operatorname{Err}(f_{\mathrm{cas},B_r})
  \le\sqrt{\min\{\rho,1-\rho\}
    \left[\delta_{P_1}(f_1)+\Ly^*(P_1)-\Ly^*(P_1+Q+2)\right]}.
\end{equation}
\end{corollary}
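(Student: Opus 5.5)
The plan is to combine the decomposition in Equation~\ref{eq:relative-cascade-decomposition} with Theorem~\ref{thm:weak-error-capture}; no new construction is needed.

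\textbf{Step 1: rewrite the cascade error.} Using Equation~\ref{eq:routing-cascade}, the cascade is wrong exactly when the model it selects is wrong. Conditioning on $X$ and on the tie-breaking draws gives Equation~\ref{eq:routing-cascade-error}:
\[
\operatorname{Err}(f_{\mathrm{cas},B})=\E[(1-B)\epsilon_1(X)+B\epsilon_2(X)].
\]
This conditioning is valid because the routing decision $B$ depends only on $X$ and on the tie-breaking randomness, and that randomness is independent of $Y$ given $X$. Applying the identity to $B_r$ and to $B_f$ and subtracting yields
\[
\operatorname{Err}(f_{\mathrm{cas},B_f})-\operatorname{Err}(f_{\mathrm{cas},B_r})=\E[u\epsilon_1(X)]-\E[u\epsilon_2(X)].
\]

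\textbf{Step 2: drop the strong-model term.} By the hypothesis $\E[u\epsilon_2(X)]\ge0$, the right-hand side is at most $\E[u\epsilon_1(X)]$.

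\textbf{Step 3: apply Theorem~\ref{thm:weak-error-capture}.} That theorem holds under the same conditions, namely Assumption~\ref{a:floor}, a budget-$Q$ score $r$, and a rate $\rho$. It bounds $\E[u\epsilon_1(X)]$ by exactly the square root appearing in Equation~\ref{eq:cascade-improvement-bound}, which completes the proof.

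The only step that needs care is the first. The identity must hold with the randomized tie-breaking included, and for this the tie-breaking draws must be treated as auxiliary inputs that are conditionally independent of $Y$, as in Appendix~\ref{app:routing-to-prediction}. Everything after that is a direct substitution.
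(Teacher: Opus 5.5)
Your proof is correct and follows the paper's argument: use the decomposition in Equation~\ref{eq:relative-cascade-decomposition}, drop the nonnegative strong-error term, and invoke Theorem~\ref{thm:weak-error-capture}. Your extra care in deriving Equation~\ref{eq:routing-cascade-error} with the tie-breaking randomness treated as conditionally independent of $Y$ matches the convention the paper adopts in Appendix~\ref{app:routing-to-prediction}.
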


\begin{proof}[Proof of Corollary~\ref{cor:cascade-improvement}]
By Equation~\ref{eq:relative-cascade-decomposition}, the cascade accuracy gain equals $\E[u\epsilon_1(X)]-\E[u\epsilon_2(X)]\le\E[u\epsilon_1(X)]$, which Theorem~\ref{thm:weak-error-capture} bounds.
\end{proof}

In a longer cascade, the analysis applies at each stage to its incoming queries, with $f_2$ representing a fixed downstream policy.

\newpage
\section{Additional Experimental Setup}
\label{app:experiment-details}

\subsection{Models and Data Splits}
\label{app:exp-models}
\label{app:exp-data}

Table~\ref{tab:exp-models} lists the 16 releases evaluated before and after task-specific finetuning.
A \emph{model pair} refers to a release and its selected finetuned checkpoint on one task.

\begin{table}[!htbp]
  \centering
  \small
  \begin{tabular}{ll}
    \hline
    Base release & Instruction-tuned release \\
    \hline
    \model{Qwen3.5-9B-Base} & \model{Qwen3.5-9B} \\
    \model{Gemma-3-12B-PT} & \model{Gemma-3-12B-IT} \\
    \model{Llama-3.1-8B} & \model{Llama-3.1-8B-Instruct} \\
    \model{Apertus-8B-2509} & \model{Apertus-8B-Instruct-2509} \\
    \model{OLMo-3-1025-7B} & \model{OLMo-3-7B-Instruct} \\
    \model{Falcon3-7B-Base} & \model{Falcon3-7B-Instruct} \\
    \model{Mistral-7B-v0.3} & \model{Mistral-7B-Instruct-v0.3} \\
    \model{Mistral-NeMo-Minitron-8B-Base} & \model{Mistral-NeMo-Minitron-8B-Instruct} \\
    \hline
  \end{tabular}
  \caption{Base and instruction-tuned releases from eight model families, giving 16 models per task:
  Qwen3.5 \citep{qwen2026qwen35}, Gemma 3 \citep{gemma2025gemma3}, Llama 3.1 \citep{grattafiori2024llama3},
  Apertus \citep{hernandezcano2025apertus}, OLMo 3 \citep{olmo2025olmo3}, Falcon3 \citep{falcon2024falcon3},
  Mistral 7B \citep{jiang2023mistral}, and Mistral-NeMo-Minitron \citep{sreenivas2024minitron}.}
  \label{tab:exp-models}
\end{table}

Table~\ref{tab:exp-splits} gives the split sizes.
For \dataset{WinoGrande}, \dataset{CommonsenseQA}, and \dataset{GSM8K}, we hold out development questions from the original training set and reserve five remaining questions as demonstrations.
The first two tasks use their original validation sets for testing; \dataset{GSM8K} uses its test set.
We split \dataset{MMLU-Pro}'s 12,032 labeled questions into 6,000 for teacher generation, 3,000 for development, and 3,032 for test, stratified by subject and source.
Its official validation set supplies five demonstrations per subject.
Demonstrations are shared across models and excluded from all other splits; \dataset{MMLU-Pro} metrics pool subjects.
The 85/15 development partition for predictor fitting and selection is shared across model versions and predictor sizes.

\begin{table}[!htbp]
  \centering
  \small
  \setlength{\tabcolsep}{5pt}
  \begin{tabular}{lrrrrr}
    \hline
     & & & \multicolumn{2}{c}{Development split} & \\
    Task & Finetuning & Development & Fit (85\%) & Select (15\%) & Test \\
    \hline
    \dataset{WinoGrande} & 32,313 & 8,080 & 6,868 & 1,212 & 1,267 \\
    \dataset{CommonsenseQA} & 6,394 & 3,342 & 2,840 & 502 & 1,221 \\
    \dataset{GSM8K} & 5,088 & 2,241 & 1,904 & 337 & 1,319 \\
    \dataset{MMLU-Pro} & 5,233 & 3,000 & 2,550 & 450 & 3,032 \\
    \hline
  \end{tabular}
  \caption{Questions used at each stage, excluding demonstrations.
  Predictor fitting and selection partition the development set.
  For \dataset{GSM8K} and \dataset{MMLU-Pro}, finetuning counts include only questions with an accepted teacher solution.}
  \label{tab:exp-splits}
\end{table}

\newpage
\subsection{Answer Scoring and Losses}
\label{app:exp-losses}

\paragraph{Prompts and option scores.}
Base models receive plain-text demonstrations; instruction-tuned models use user/assistant turns in their native chat templates, with thinking disabled.
On \dataset{CommonsenseQA}, the option score is the answer completion's log likelihood, without length normalization.
On \dataset{WinoGrande}, each option fills the blank, and we score the shared continuation under that filled context.

\paragraph{Sampled-answer probabilities.}
Generation uses 128 samples per question at temperature 0.9 and top-$p=1$, with a 2,048-token output limit.
Let $c_x(j)$ count each distinct extracted answer in $S_x$, including a separate category for extraction failures.
We reserve a small mass for unseen answers in category $u$:
\begin{equation}\label{eq:exp-bucket-distribution}
  p(j\given x)=\frac{c_x(j)}{128.5}\quad(j\in S_x),
  \qquad p(u\given x)=\frac{0.5}{128.5}.
\end{equation}
We predict the most frequent successfully extracted answer and use its probability as self-confidence.
An unsampled gold answer receives probability $p(u\given x)$ when computing answer loss.
If all samples fail extraction, the prediction is incorrect with confidence $p(u\given x)$.

\paragraph{Loss conventions.}
All losses are in nats; binary cross-entropy clips probabilities to $[10^{-6},1-10^{-6}]$.
Table~\ref{tab:exp-reference-losses} gives the reference losses for empirical suboptimality (Equation~\ref{eq:exp-endpoints}), held fixed before and after finetuning.

\begin{table}[!htbp]
  \centering
  \small
  \begin{tabular}{lrl}
    \hline
    Task & Loss (nats) & Finetuned model attaining the reference \\
    \hline
    \dataset{WinoGrande} & 0.3229 & \model{Mistral-NeMo-Minitron-8B-Base} \\
    \dataset{CommonsenseQA} & 0.3390 & \model{Qwen3.5-9B-Base} \\
    \dataset{GSM8K} & 0.1378 & \model{Qwen3.5-9B} \\
    \dataset{MMLU-Pro} & 0.7247 & \model{Qwen3.5-9B-Base} \\
    \hline
  \end{tabular}
  \caption{Reference answer losses for empirical suboptimality, computed over the released and finetuned models retained on each task.}
  \label{tab:exp-reference-losses}
\end{table}

\newpage
\subsection{Task-Specific Finetuning and Model Retention}
\label{app:exp-training}
\label{app:exp-model-selection}
\label{app:exp-inclusion}

\paragraph{Teacher solutions.}
For the generation tasks, \model{Qwen3.5-122B-A10B} receives each question without demonstrations and is asked to reason step by step and end with the required answer format.
It generates four solutions at temperature 0.7 and top-$p=0.9$, with a 2,048-token limit.
We keep one whose extracted answer matches the gold answer, omitting questions with no accepted solution.
These targets are generated once and shared across models and seeds.

\paragraph{Optimization and selection.}
LoRA uses rank 8, scaling factor 32, and dropout 0.1 on attention query and value projections; for \model{Qwen3.5-9B}, it also adapts fused query--key--value projections in linear-attention layers.
We train three seeds at learning rate $5\times10^{-5}$, weight decay 0.01, and effective batch size 8.
Only the gold completion or accepted teacher solution contributes to the loss.
Table~\ref{tab:exp-training-settings} gives the schedules and selection criteria.
Greedy accuracy selects generation-task checkpoints; reported results use 128-sample majority answers.

\begin{table}[!htbp]
  \centering
  \small
  \begin{tabular}{lrrl}
    \hline
    Task & Steps & Token limit & Checkpoint selection \\
    \hline
    \dataset{WinoGrande} & 2,000 & 256 & Lowest development answer loss \\
    \dataset{CommonsenseQA} & 1,200 & 512 & Lowest development answer loss \\
    \dataset{GSM8K} & 1,500 & 4,096 & Highest development greedy accuracy \\
    \dataset{MMLU-Pro} & 1,200 & 6,144 & Highest development greedy accuracy \\
    \hline
  \end{tabular}
  \caption{Finetuning settings for the weak models. We save four checkpoints per seed, one at each quarter of training, and select among the 12 candidates from three seeds.}
  \label{tab:exp-training-settings}
\end{table}

\paragraph{Model retention.}

We retain a model pair when finetuning increases development accuracy.
Accuracy uses the highest-scoring option or the 128-sample majority answer, depending on the task.
The rule uses no test labels and retains 16, 16, 14, and 13 pairs in task order.
All reported comparisons, including predictor-size means and empirical loss references, use these retained pairs.

Table~\ref{tab:exp-development-accuracy} shows development accuracy for all models.
Finetuning improves accuracy for every model on the two option-scoring tasks.
Accuracy falls for two instruction-tuned releases on \dataset{GSM8K} and three on \dataset{MMLU-Pro}; all five are among the most accurate releases on their respective tasks.
The illustrative model, \model{Llama-3.1-8B} base, is retained on every task.

\begin{table}[!htbp]
  \centering
  \scriptsize
  \setlength{\tabcolsep}{3pt}
  \renewcommand{\arraystretch}{1.15}
  \begin{tabular}{l@{\hspace{8pt}}rr@{\hspace{9pt}}rr@{\hspace{9pt}}rr@{\hspace{9pt}}rr}
  \hline
   & \multicolumn{2}{c}{\dataset{WinoGrande}} & \multicolumn{2}{c}{\dataset{CommonsenseQA}} & \multicolumn{2}{c}{\dataset{GSM8K}} & \multicolumn{2}{c}{\dataset{MMLU-Pro}} \\
  Model & Rel. & FT & Rel. & FT & Rel. & FT & Rel. & FT \\
  \hline
  \noalign{\vskip 2pt}
  \model{Qwen3.5-9B-Base} & 88.6 & 93.3 & 81.5 & 83.7 & 94.3 & 96.0 & 66.8 & 76.7 \\
  \model{Qwen3.5-9B} & 81.9 & 93.7 & 81.5 & 83.6 & 95.9 & 96.1 & 78.9$^{\dagger}$ & 77.3$^{\dagger}$ \\
  \model{Gemma-3-12B-PT} & 89.5 & 94.2 & 76.5 & 82.4 & 87.1 & 92.7 & 53.5 & 63.0 \\
  \model{Gemma-3-12B-IT} & 82.1 & 93.1 & 76.2 & 80.4 & 93.4 & 95.4 & 64.6$^{\dagger}$ & 62.6$^{\dagger}$ \\
  \model{Llama-3.1-8B} & 88.3 & 93.4 & 73.1 & 78.8 & 70.6 & 82.7 & 41.6 & 50.2 \\
  \model{Llama-3.1-8B-Instruct} & 84.2 & 93.3 & 75.7 & 79.7 & 96.1 & 96.8 & 53.1 & 57.5 \\
  \model{Apertus-8B-2509} & 87.8 & 92.1 & 71.9 & 78.7 & 67.2 & 75.5 & 35.1 & 46.0 \\
  \model{Apertus-8B-Instruct-2509} & 86.3 & 92.4 & 70.2 & 78.2 & 78.0 & 83.0 & 40.4 & 46.2 \\
  \model{OLMo-3-1025-7B} & 86.2 & 91.3 & 71.9 & 80.2 & 90.9 & 94.4 & 41.7 & 58.3 \\
  \model{OLMo-3-7B-Instruct} & 80.5 & 89.8 & 74.8 & 79.5 & 96.8$^{\dagger}$ & 95.9$^{\dagger}$ & 59.6$^{\dagger}$ & 56.2$^{\dagger}$ \\
  \model{Falcon3-7B-Base} & 87.3 & 91.2 & 76.3 & 78.8 & 91.6 & 94.0 & 50.5 & 60.3 \\
  \model{Falcon3-7B-Instruct} & 81.5 & 91.7 & 78.2 & 81.4 & 95.4 & 95.5 & 60.5 & 61.3 \\
  \model{Mistral-7B-v0.3} & 89.5 & 93.9 & 70.3 & 80.1 & 55.3 & 77.6 & 36.8 & 46.0 \\
  \model{Mistral-7B-Instruct-v0.3} & 87.8 & 94.3 & 71.2 & 80.0 & 73.8 & 80.6 & 39.9 & 45.7 \\
  \model{Mistral-NeMo-Minitron-8B-Base} & 89.9 & 94.2 & 84.2 & 86.3 & 81.5 & 88.6 & 42.4 & 53.3 \\
  \model{Mistral-NeMo-Minitron-8B-Instruct} & 82.9 & 92.6 & 80.1 & 85.3 & 97.9$^{\dagger}$ & 96.5$^{\dagger}$ & 49.3 & 54.1 \\
  \hline
  \end{tabular}
  \caption{\textbf{Development accuracy (\%) of released models (Rel.) and their selected finetuned checkpoints (FT).}
  $\dagger$ marks a model whose finetuned checkpoint's development accuracy does not exceed that of its released model.
  Marked pairs are excluded from all reported comparisons, including Figure~\ref{fig:exp-size-ablation}.}
  \label{tab:exp-development-accuracy}
\end{table}

\paragraph{Trajectory experiment.}
For Figure~\ref{fig:exp-trajectory}, we rerun \model{Llama-3.1-8B} base with seed 0 on \dataset{CommonsenseQA} and evaluate every 100 steps through step 600.
Step 0 reuses the original results; later checkpoints each use a new Platt map and the mean loss of three newly trained 4B predictors.

\newpage
\subsection{Performance Predictor Training}
\label{app:exp-predictor}

\paragraph{Inputs.}
The predictor receives the question, the model's final answer, its answer distribution, and a prompt to judge correctness.
On generation tasks, it also receives one sampled reasoning trace ending in the majority answer.
The answer distribution is represented by all option probabilities on the option-scoring tasks, the five most frequent answers on \dataset{GSM8K}, and all sampled option counts on \dataset{MMLU-Pro}.
Inputs exclude demonstrations, chat-template tokens, and gold answers, and are never truncated.

\paragraph{Architecture.}
We encode the input with \model{Qwen3-Embedding-4B} to obtain $\phi(x)$ and apply a linear correction to the self-confidence logit:
\begin{equation}\label{eq:exp-residual}
  g(1\given x)=\sigma\!\left(a+b\,\operatorname{logit}f_Z(1\given x)+\operatorname{head}(\phi(x))\right).
\end{equation}
Here $a,b$ are learned scalars and $\sigma$ is the sigmoid function.

\paragraph{Initialization and training.}
For each evaluated model, we fit Platt calibration~\citep{platt2000probabilistic} by minimizing binary cross-entropy on the predictor-fitting split:
\begin{equation}\label{eq:exp-platt}
  f_Z^{\mathrm{Platt}}(1\given x)
  =\sigma\!\left(\alpha+\beta\,\operatorname{logit}f_Z(1\given x)\right).
\end{equation}
This map is shared across predictor sizes and initializes $(a,b)=(\alpha,\beta)$ in Equation~\ref{eq:exp-residual}; the linear head starts at zero.
The \model{Qwen3-Embedding} backbone uses the final token's hidden state and LoRA on query and value projections with rank 16, scaling factor 32, and dropout 0.05.
We train the adapters, head, and $a,b$ for up to three epochs at learning rate $10^{-4}$ and effective batch size 16, with weight decay $10^{-4}$ on the head only.
Among ten seeds and all epochs, including epoch zero, we select the lowest correctness loss on the predictor-selection split.

\newpage
\subsection{Cascade Evaluation}
\label{app:experiment-cascade}

\paragraph{Strong model.}
We use \model{Qwen3.5-122B-A10B} as the strong model.
On \dataset{WinoGrande} and \dataset{CommonsenseQA}, we finetune it and select the checkpoint with the lowest development answer loss.
These runs use the weak models' training data, prompts, and optimization settings for 2,000 and 1,600 steps, respectively.
We save four checkpoints at quarter intervals and select the lowest development answer loss among them and the released model.
Evaluation uses the weak models' demonstrations and option-scoring rules.

On \dataset{GSM8K} and \dataset{MMLU-Pro}, the released strong model generates 128 samples in thinking mode at temperature 1.0, top-$p=0.95$, top-$k=20$, and presence penalty 1.5, with a 32,768-token limit.
Answer extraction and majority selection follow the weak-model evaluation.

\paragraph{Accuracy and integration.}
For either routing score $r$, we sort the $n$ test questions in decreasing order, breaking ties uniformly at random.
At $\rho=k/n$, we escalate the first $k$ questions.
Expected cascade accuracy is
\begin{equation}\label{eq:exp-cascade-accuracy}
  1-\operatorname{Err}(f_{\mathrm{cas},B_{r,\rho}})=\frac{1}{n}\E\left[
    \sum_{i=1}^n Z_{1,i}
    +\sum_{i=1}^n B_{r,\rho}(x_i)(Z_{2,i}-Z_{1,i})
  \right],
\end{equation}
where $Z_{1,i}$ and $Z_{2,i}$ indicate weak- and strong-model correctness, and the expectation is over tie-breaking with answers held fixed.
We linearly interpolate between attainable rates and integrate with the trapezoidal rule.
Accuracy is linear across each tied group, so its endpoints suffice.
We report routing advantage $100D$ in percentage points averaged over escalation rates.

\newpage
\section{Additional Experimental Results}
\label{app:experiment-results}

All results use the retained pairs; predictors use the selected 4B backbone unless another size is stated.
Values listed in task order refer to \dataset{WinoGrande}, \dataset{CommonsenseQA}, \dataset{GSM8K}, and \dataset{MMLU-Pro}.

\subsection{Correlation Estimates and Uncertainty}
\label{app:exp-uncertainty}

We report Pearson's $r$ and Spearman's $r_s$, with one-sided $p$-values $p_+$ for positive correlation.
We use SciPy's standard tests, assuming independent observations and making no adjustment for multiple comparisons.

\begin{table}[!htbp]
\centering
\small
\setlength{\tabcolsep}{5pt}
\begin{tabular}{llrrrr}
\hline
Comparison & Task & Pearson $r$ & $p_+$ & Spearman $r_s$ & $p_+$ \\
\hline
$\widehat\delta$ vs. $E$ & \dataset{WinoGrande} & 0.98 & $<0.001$ & 0.73 & $<0.001$ \\
 & \dataset{CommonsenseQA} & 0.98 & $<0.001$ & 0.94 & $<0.001$ \\
 & \dataset{GSM8K} & 0.93 & $<0.001$ & 0.94 & $<0.001$ \\
 & \dataset{MMLU-Pro} & 0.65 & $<0.001$ & 0.74 & $<0.001$ \\
\hline
$\Delta\widehat\delta$ vs. $\Delta E$ & \dataset{WinoGrande} & 0.99 & $<0.001$ & 0.90 & $<0.001$ \\
 & \dataset{CommonsenseQA} & 0.99 & $<0.001$ & 0.97 & $<0.001$ \\
 & \dataset{GSM8K} & 0.78 & $<0.001$ & 0.89 & $<0.001$ \\
 & \dataset{MMLU-Pro} & 0.49 & 0.044 & 0.20 & 0.259 \\
\hline
$\widehat\delta$ vs. $D$ & \dataset{WinoGrande} & 0.79 & $<0.001$ & 0.92 & $<0.001$ \\
 & \dataset{CommonsenseQA} & 0.66 & $<0.001$ & 0.87 & $<0.001$ \\
 & \dataset{GSM8K} & 0.91 & $<0.001$ & 0.77 & $<0.001$ \\
 & \dataset{MMLU-Pro} & 0.82 & $<0.001$ & 0.85 & $<0.001$ \\
\hline
$\Delta\widehat\delta$ vs. $\Delta D$ & \dataset{WinoGrande} & 0.63 & 0.004 & 0.76 & $<0.001$ \\
 & \dataset{CommonsenseQA} & 0.42 & 0.054 & 0.56 & 0.011 \\
 & \dataset{GSM8K} & 0.73 & 0.001 & 0.51 & 0.031 \\
 & \dataset{MMLU-Pro} & 0.40 & 0.087 & 0.18 & 0.277 \\
\hline
\end{tabular}
\caption{\textbf{Correlations for the main 4B comparisons.} $p_+$ tests a positive association using the tests described in Appendix~\ref{app:exp-uncertainty}. Endpoint comparisons contain 32, 32, 28, and 26 model versions in task order; paired changes contain 16, 16, 14, and 13 model pairs.}
\label{tab:exp-correlations-main}
\end{table}

\newpage
\subsection{Released and Finetuned Models Separately}
\label{app:exp-groups}

Figures~\ref{fig:exp-predictability-separated} and~\ref{fig:exp-cascade-advantage-separated} repeat Figures~\ref{fig:exp-predictability} and~\ref{fig:exp-cascade-advantage} with released and finetuned models in separate panels, using the same models and the selected 4B predictors.
Among released models alone, excess predictability correlates with empirical suboptimality at 1.00, 0.98, 0.89, and 0.49 in task order, and routing advantage at 0.67, 0.43, 0.91, and 0.81.
The main-text associations therefore do not depend on contrasting the two groups.
Among finetuned models, routing advantage remains positively correlated with suboptimality on every task, at 0.79, 0.81, 0.94, and 0.64, and excess predictability at 0.14, 0.65, 0.99, and 0.32.
The weak value on \dataset{WinoGrande} reflects the narrow range after finetuning: all finetuned models lie within 0.22 nats of the best observed loss, and 14 of 16 have slightly negative excess predictability.
On \dataset{MMLU-Pro}, both groups show the weaker association discussed in Section~\ref{sec:exp-suboptimality}.

\begin{figure}[!htbp]
  \centering
  \includegraphics[width=\linewidth]{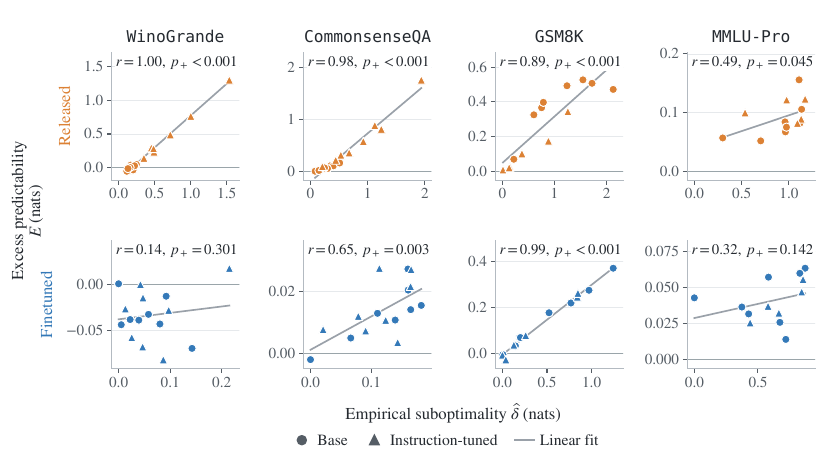}
  \caption{\textbf{Excess predictability of the 4B predictor, with released and finetuned models shown separately.}
  Each column is a task; the top row shows released models and the bottom row their finetuned versions, using the same points as Figure~\ref{fig:exp-predictability}.
  Excess predictability $E$ is the predictor's correctness-log-loss gain over self-confidence; empirical suboptimality $\widehat\delta$ is the gap to the best observed answer log loss on each task.
  Circles denote base models and triangles instruction-tuned models.
  Lines are least-squares fits within each panel; $r$ is Pearson correlation and $p_+$ its one-sided, unadjusted $p$-value for positive correlation.
  Correlations and $p$-values are in Table~\ref{tab:exp-correlations-groups}.
  Axis ranges differ across panels.}
  \label{fig:exp-predictability-separated}
\end{figure}

\begin{figure}[!htbp]
  \centering
  \includegraphics[width=\linewidth]{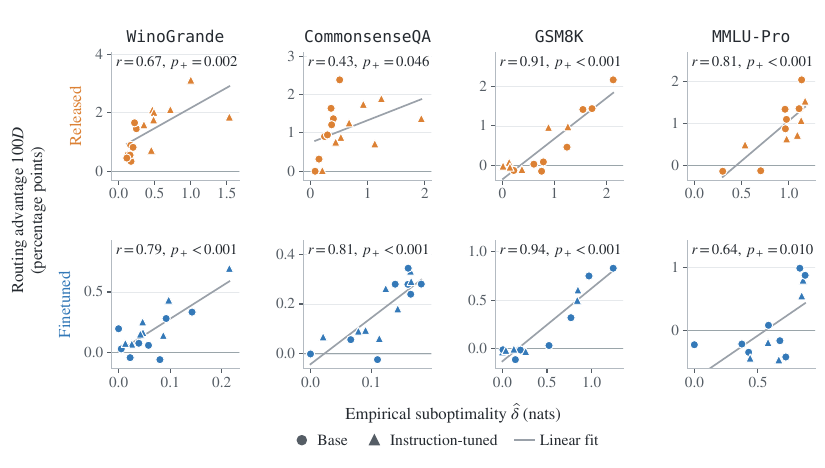}
  \caption{\textbf{Routing advantage of the 4B predictor, with released and finetuned models shown separately.}
  Routing advantage $100D$ (percentage points) versus empirical suboptimality $\widehat\delta$ (nats), using the same points as Figure~\ref{fig:exp-cascade-advantage}.
  $100D$ is the cascade accuracy gain over self-confidence routing, averaged over equal escalation rates; $\widehat\delta$ is the gap to the best observed answer log loss on each task.
  Columns are tasks; orange (top) denotes released models and blue (bottom) their finetuned versions. Circles denote base releases and triangles instruction-tuned releases.
  Lines are least-squares fits; $r$ is Pearson correlation and $p_+$ its one-sided, unadjusted $p$-value for positive correlation. Axis ranges differ across panels.}
  \label{fig:exp-cascade-advantage-separated}
\end{figure}

\begin{table}[!htbp]
\centering
\small
\setlength{\tabcolsep}{5pt}
\begin{tabular}{llrrrr}
\hline
Comparison & Task & Pearson $r$ & $p_+$ & Spearman $r_s$ & $p_+$ \\
\hline
$\widehat\delta$ vs. $E$ (released) & \dataset{WinoGrande} & 1.00 & $<0.001$ & 0.92 & $<0.001$ \\
 & \dataset{CommonsenseQA} & 0.98 & $<0.001$ & 0.97 & $<0.001$ \\
 & \dataset{GSM8K} & 0.89 & $<0.001$ & 0.90 & $<0.001$ \\
 & \dataset{MMLU-Pro} & 0.49 & 0.045 & 0.60 & 0.015 \\
\hline
$\widehat\delta$ vs. $E$ (finetuned) & \dataset{WinoGrande} & 0.14 & 0.301 & -0.03 & 0.547 \\
 & \dataset{CommonsenseQA} & 0.65 & 0.003 & 0.64 & 0.004 \\
 & \dataset{GSM8K} & 0.99 & $<0.001$ & 0.98 & $<0.001$ \\
 & \dataset{MMLU-Pro} & 0.32 & 0.142 & 0.44 & 0.066 \\
\hline
$\widehat\delta$ vs. $D$ (released) & \dataset{WinoGrande} & 0.67 & 0.002 & 0.86 & $<0.001$ \\
 & \dataset{CommonsenseQA} & 0.43 & 0.046 & 0.59 & 0.008 \\
 & \dataset{GSM8K} & 0.91 & $<0.001$ & 0.81 & $<0.001$ \\
 & \dataset{MMLU-Pro} & 0.81 & $<0.001$ & 0.80 & $<0.001$ \\
\hline
$\widehat\delta$ vs. $D$ (finetuned) & \dataset{WinoGrande} & 0.79 & $<0.001$ & 0.56 & 0.011 \\
 & \dataset{CommonsenseQA} & 0.81 & $<0.001$ & 0.82 & $<0.001$ \\
 & \dataset{GSM8K} & 0.94 & $<0.001$ & 0.82 & $<0.001$ \\
 & \dataset{MMLU-Pro} & 0.64 & 0.010 & 0.63 & 0.010 \\
\hline
\end{tabular}
\caption{\textbf{Within-stage correlations for 4B predictors.} $p_+$ tests a positive association using the tests described in Appendix~\ref{app:exp-uncertainty}. Each stage contains 16, 16, 14, and 13 models in task order.}
\label{tab:exp-correlations-groups}
\end{table}

\newpage
\subsection{Predictor Capacity}
\label{app:experiment-capacity}
\label{sec:exp-capacity}

We compare 0.6B, 4B, and 8B predictors to measure how much excess predictability each size captures and whether our main findings persist across predictor sizes.
All three predictors use \model{Qwen3-Embedding} backbones and evaluate the same retained model pairs: 16, 16, 14, and 13 in task order.
We report gains over self-confidence and use two-parameter Platt calibration as an additional baseline.

\paragraph{Excess predictability captured at each size.}
The results are shown in Figure~\ref{fig:exp-size-ablation}.
For released models, larger predictors generally achieve greater gains over self-confidence (first row, orange circles).
After finetuning (first row, blue squares), mean gains decrease on every task at all three sizes, and increasing predictor size also provides less benefit.

The second row compares predictors with Platt-calibrated self-confidence.
The 0.6B predictor adds little beyond calibration, whereas the 4B and 8B predictors generally achieve lower correctness log loss.
For both the 4B and 8B predictors, mean gains over Platt-calibrated self-confidence decrease after finetuning on every task.
For finetuned models on \dataset{WinoGrande}, no predictor size improves on Platt calibration on average.
This is consistent with the finetuned models' low empirical suboptimality on this simple task, leaving little room for additional predictor capacity to help.

\begin{figure}[!htbp]
  \centering
  \includegraphics[width=\linewidth]{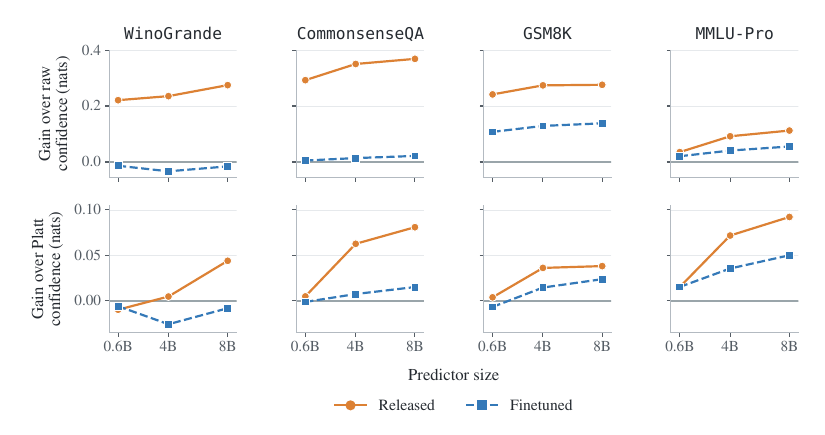}
  \caption{\textbf{Predictor gains over self-confidence and Platt calibration.}
  Mean baseline loss minus predictor loss over the retained models on each task; positive values favor the predictor.
  Columns show tasks; rows compare against raw and Platt-calibrated confidence.
  Orange solid lines with circles: released models; blue dashed lines with squares: finetuned models.
  Predictor size is on the horizontal axis; vertical scales are shared within each row.
  Lines connect the three evaluated sizes.}
  \label{fig:exp-size-ablation}
\end{figure}

\paragraph{Robustness across predictor sizes.}
We check whether predictor size changes the associations with suboptimality across models and the observed effects of finetuning.
Across models, both excess predictability and routing advantage remain positively correlated with suboptimality on every task at 8B, as at 4B (Figures~\ref{fig:exp-predictability-06b}--\ref{fig:exp-cascade-advantage-8b}).
These associations are less consistent at 0.6B: the correlation is negative for excess predictability on \dataset{MMLU-Pro}, and for routing advantage on \dataset{WinoGrande} and \dataset{MMLU-Pro}.
On these tasks, the 0.6B predictor adds little beyond calibration.

The finetuning result holds at all three sizes: finetuning reduces mean excess predictability on every task (Figure~\ref{fig:exp-size-ablation}).
Larger reductions in suboptimality also accompany larger reductions in excess predictability at every size, with Pearson correlations of at least 0.99, 0.98, 0.73, and 0.48 in task order.

\begin{figure}[!htbp]
  \centering
  \includegraphics[width=\linewidth]{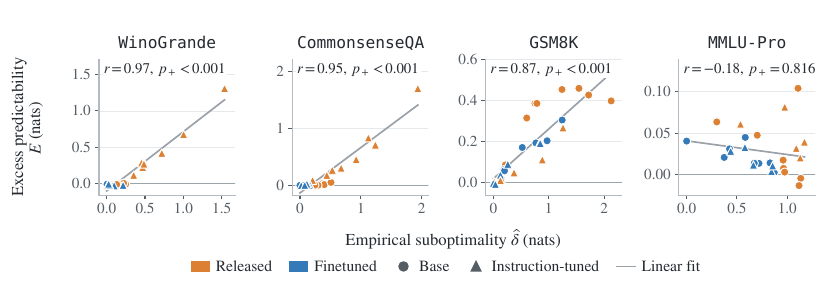}
  \caption{\textbf{Excess predictability with the 0.6B predictor.}
  $E$ is the predictor's correctness-log-loss gain over self-confidence; $\widehat\delta$ is the gap to the best observed answer log loss on each task.
  Orange denotes released models and blue finetuned models; circles denote base releases and triangles instruction-tuned releases.
  Lines are least-squares fits; $r$ is Pearson correlation and $p_+$ its one-sided, unadjusted $p$-value for positive correlation.}
  \label{fig:exp-predictability-06b}
\end{figure}

\begin{figure}[!htbp]
  \centering
  \includegraphics[width=\linewidth]{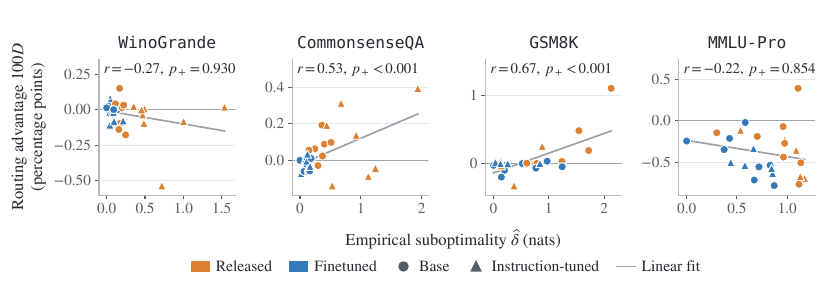}
  \caption{\textbf{Routing advantage with the 0.6B predictor.}
  $100D$ is the cascade accuracy gain over self-confidence routing, averaged over equal escalation rates, in percentage points; $\widehat\delta$ is the gap to the best observed answer log loss on each task.
  Orange denotes released models and blue finetuned models; circles denote base releases and triangles instruction-tuned releases.
  Lines are least-squares fits; $r$ is Pearson correlation and $p_+$ its one-sided, unadjusted $p$-value for positive correlation.}
  \label{fig:exp-cascade-advantage-06b}
\end{figure}

\begin{figure}[!htbp]
  \centering
  \includegraphics[width=\linewidth]{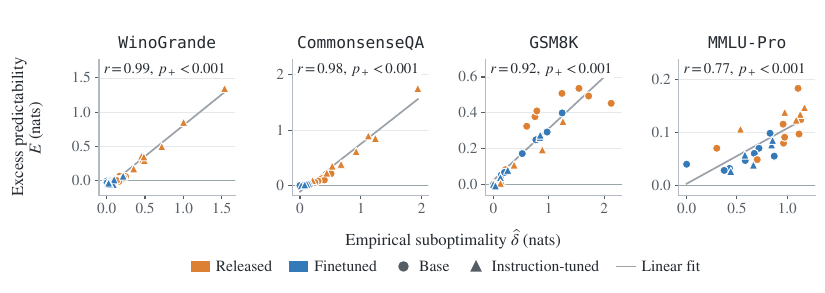}
  \caption{\textbf{Excess predictability with the 8B predictor.}
  $E$ is the predictor's correctness-log-loss gain over self-confidence; $\widehat\delta$ is the gap to the best observed answer log loss on each task.
  Orange denotes released models and blue finetuned models; circles denote base releases and triangles instruction-tuned releases.
  Lines are least-squares fits; $r$ is Pearson correlation and $p_+$ its one-sided, unadjusted $p$-value for positive correlation.}
  \label{fig:exp-predictability-8b}
\end{figure}

\begin{figure}[!htbp]
  \centering
  \includegraphics[width=\linewidth]{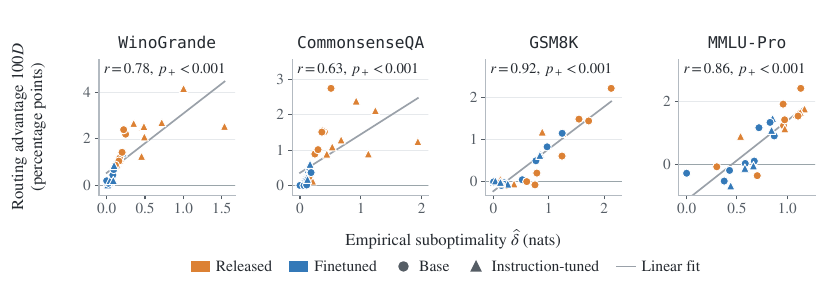}
  \caption{\textbf{Routing advantage with the 8B predictor.}
  $100D$ is the cascade accuracy gain over self-confidence routing, averaged over equal escalation rates, in percentage points; $\widehat\delta$ is the gap to the best observed answer log loss on each task.
  Orange denotes released models and blue finetuned models; circles denote base releases and triangles instruction-tuned releases.
  Lines are least-squares fits; $r$ is Pearson correlation and $p_+$ its one-sided, unadjusted $p$-value for positive correlation.}
  \label{fig:exp-cascade-advantage-8b}
\end{figure}

\clearpage
\newpage
\subsection{Cascade Accuracy Across Tasks}
\label{app:exp-cascade-curves}

Figure~\ref{fig:exp-cascade-curves-four} extends the main-text \dataset{CommonsenseQA} example to all four tasks, using \model{Llama-3.1-8B} base before and after finetuning.
The gap between predictor and self-confidence routing measures the benefit of choosing different questions at the same escalation rate.

\begin{figure}[!htbp]
  \centering
  \includegraphics[width=\linewidth]{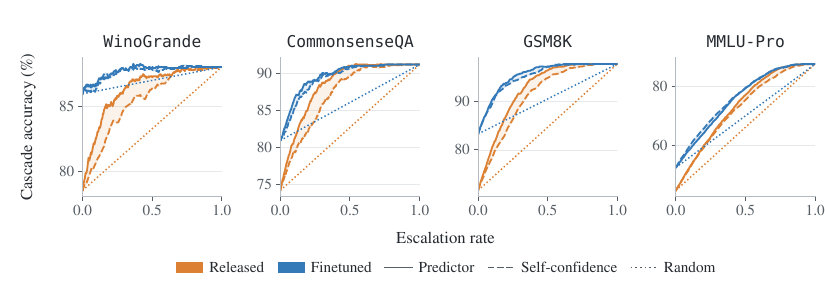}
  \caption{\textbf{Cascade accuracy across all four tasks.}
  We pair \model{Llama-3.1-8B} base with \model{Qwen3.5-122B-A10B}, finetuned on \dataset{WinoGrande} and \dataset{CommonsenseQA} and released on the other two tasks.
  Orange denotes the released weak model and blue its finetuned version.
  Solid curves use the selected 4B predictor $g$, dashed curves use self-confidence $f_{1,Z}$, and dotted lines show random routing.
  Shading shows the accuracy difference between the two routers.}
  \label{fig:exp-cascade-curves-four}
\end{figure}



\newpage
\subsection{How Strong-Model Errors Affect Routing Gains}
\label{app:exp-cascade-results}

\paragraph{Two contributions.}
By Equation~\ref{eq:relative-cascade-decomposition}, the routing advantage at each escalation rate is $\E[u\epsilon_1(X)]-\E[u\epsilon_2(X)]$ with $u=B_{g,\rho}-B_{f,\rho}$.
The first term is the \emph{weak-error term}: weak errors escalated by predictor routing minus those escalated by self-confidence routing.
The second is the \emph{strong-error term}: strong errors encountered by predictor routing minus those encountered by self-confidence routing.
We estimate both on the test questions by replacing $\epsilon_1$ and $\epsilon_2$ with the observed error indicators $1-Z_{1,i}$ and $1-Z_{2,i}$, averaging over tie-breaking as in Equation~\ref{eq:exp-cascade-accuracy}, and integrating over escalation rates as for $D$.
Table~\ref{tab:exp-cascade-split} reports both terms.

\begin{table}[!htbp]
  \centering
  \small
  \begin{tabular}{llrrr}
  \hline
  Task & Models & $100D$ & Weak-error term & Strong-error term \\
  \hline
  \dataset{WinoGrande} & Released & 1.38 & 1.80 & 0.42 \\
   & Finetuned & 0.18 & 0.21 & 0.03 \\
  \hline
  \dataset{CommonsenseQA} & Released & 1.09 & 1.26 & 0.17 \\
   & Finetuned & 0.18 & 0.17 & -0.01 \\
  \hline
  \dataset{GSM8K} & Released & 0.52 & 0.59 & 0.07 \\
   & Finetuned & 0.20 & 0.24 & 0.03 \\
  \hline
  \dataset{MMLU-Pro} & Released & 0.94 & 1.75 & 0.82 \\
   & Finetuned & 0.07 & 0.94 & 0.87 \\
  \hline
  \end{tabular}
  \caption{\textbf{Routing advantage split into two contributions.}
  Means over retained models with 4B predictors, in percentage points averaged over escalation rates.
  Both error terms are differences from self-confidence routing; $100D$ equals the weak-error term minus the strong-error term, up to rounding.}
  \label{tab:exp-cascade-split}
\end{table}

\paragraph{Strong errors usually reduce the gain.}
Predictor routing captures more weak-model errors, but the strong model also answers some of the escalated questions incorrectly.
Averaged over escalation rates, the strong-error term is nonnegative for 91 of 118 retained model versions: in these cases, strong errors offset some of the gain from capturing additional weak errors.
A bootstrap with 500 resamples of test questions gives a 95\% interval entirely above zero in 40 cases and entirely below zero in one.
The effect is largest on \dataset{MMLU-Pro}, where the strong model is 87.6\% accurate.
For finetuned weak models on this task, capturing additional weak errors contributes 0.94 percentage points, but strong errors subtract 0.87 points, leaving a routing advantage of 0.07 points.
On \dataset{GSM8K}, the strong model is 97.9\% accurate, so strong errors have less effect.

\paragraph{Connection to the bound.}
Theorem~\ref{thm:weak-error-capture} bounds the additional weak errors captured in terms of suboptimality and predictor capacity.
At any escalation rate where the strong-error term is nonnegative, Corollary~\ref{cor:cascade-improvement} carries this bound over to the routing advantage.
The 91 cases above satisfy the corresponding comparison after averaging over escalation rates.
Empirically, suboptimality is at least as strongly correlated with the weak-error term as with the routing advantage $D$.
The correlations are 0.80, 0.72, 0.91, and 0.86 in task order, compared with 0.79, 0.66, 0.91, and 0.82 for $D$.
This is consistent with strong errors adding variation to the routing advantage in Figure~\ref{fig:exp-cascade-advantage}.
Avoiding strong errors can also increase the routing advantage, but the largest observed contribution is only 0.12 percentage points.
Without the sign condition, the strong-error contribution to the gain at rate $\rho$ is at most $\min\{\operatorname{Err}(f_2),\rho,1-\rho\}$.
The worst-case allowances averaged over escalation rates are much larger than the measured contributions: 10.6, 8.0, 2.1, and 10.8 points in task order.

\end{document}